\def\onedot{.}
\def\eg{{\em e.g}\onedot} 
\def\ie{{\em i.e}\onedot}
\newtheorem{theorem}{Theorem}
\newtheorem{lemma}[theorem]{Lemma}
\newtheorem{remark}{Remark}
\title{Accelerating Score-based Generative Models for High-Resolution Image Synthesis}
\author{%
  Hengyuan Ma$^1$
  \quad
  Li Zhang$^1$\thanks{Li Zhang (lizhangfd@fudan.edu.cn) is the corresponding author with School of Data Science, Fudan University.}
  \quad
  Xiatian Zhu$^2$
  \quad
  Jingfeng Zhang$^3$
  \quad
  Jianfeng Feng$^1$ 
  \vspace{.5em} 
  \\
  $^1$Fudan University
  \qquad
  $^2$University of Surrey
  \qquad
  $^3$RIKEN
  \vspace{.5em} 
  \\
  \url{https://fudan-zvg.github.io/TDAS}
}
\begin{document}

\maketitle

\begin{abstract}
Score-based generative models (SGMs) have recently emerged as a promising class of generative models.
The key idea is to produce high-quality images by recurrently adding Gaussian noises and gradients to a Gaussian sample until converging to the target distribution, a.k.a. the diffusion sampling.
To ensure stability of convergence in sampling and generation quality, however, this sequential sampling process has to take a {\em small step size} and many sampling iterations (\eg, 2000).
Several acceleration methods have been proposed
with focus on low-resolution generation. 
In this work, we consider the acceleration of high-resolution 
generation with SGMs, a more challenging yet more important problem. 
We prove theoretically that this {slow convergence} drawback is primarily due to the ignorance of the target distribution.
Further, we introduce a novel {\bf\em Target Distribution Aware Sampling} (TDAS) method by leveraging the structural priors in space and frequency domains.
Extensive experiments on CIFAR-10, CelebA, LSUN, and FFHQ datasets validate that TDAS can consistently accelerate state-of-the-art SGMs, particularly on more challenging high resolution ($1024\times 1024$) image generation tasks by up to $18.4\times$, whilst largely maintaining the synthesis quality.
With fewer sampling iterations, TDAS can still generate good quality images. In contrast, the existing methods degrade drastically or even fails completely.
\end{abstract}

\section{Introduction}
\begin{figure}[ht]
    \hspace{-11ex}
    \includegraphics[scale=0.24]{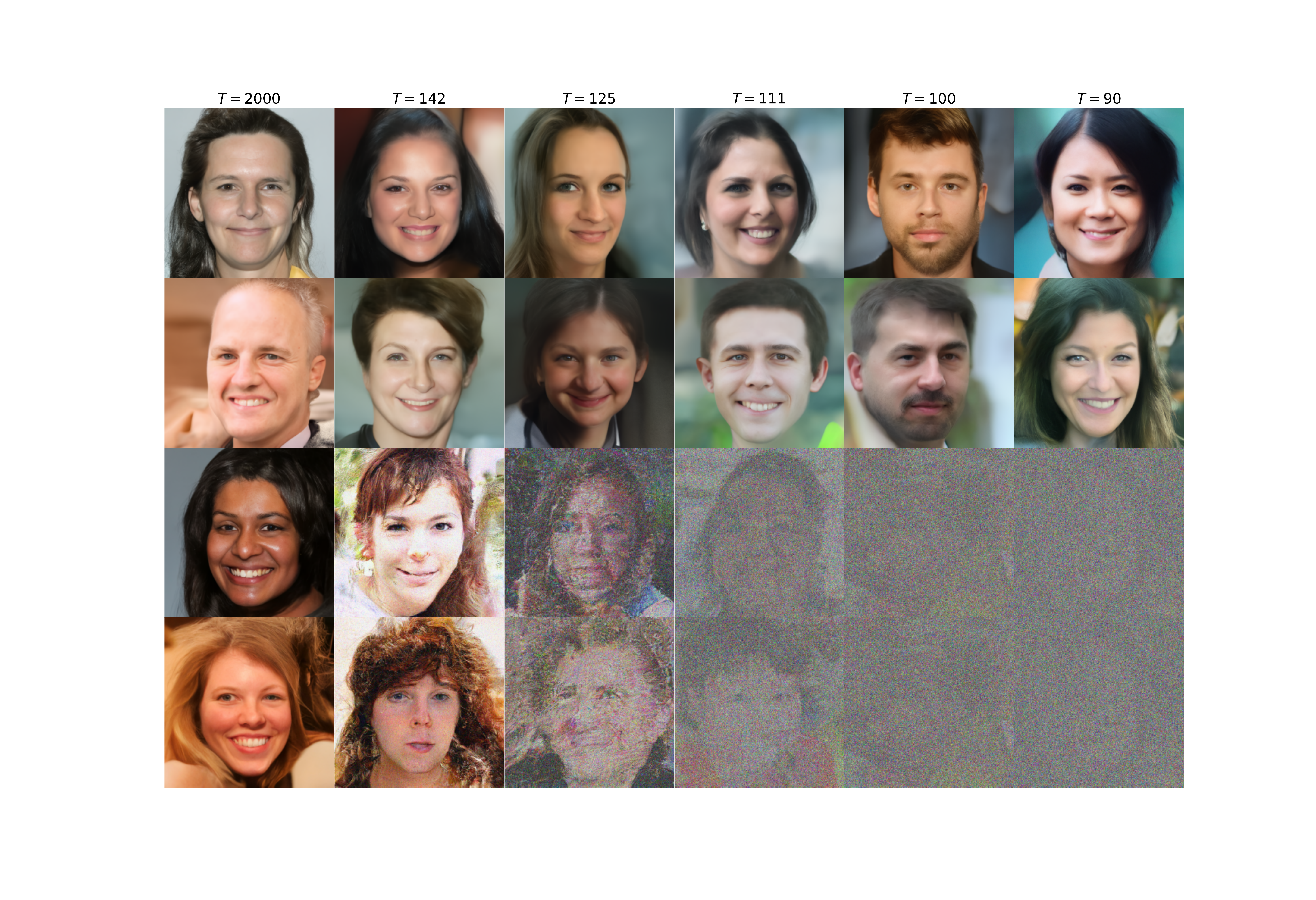}
    \vspace{-9ex}
    		\caption{High resolution facial images (FFHQ~\cite{karras2019style} $1024 \times 1024$) generated by the proposed TDAS (top two rows) and original (vanilla) sampling (bottom two rows) under a variety of sampling iterations $T$.
    		It is evident that 
with less sampling iterations, 
TDAS can still generate good quality of images
while the existing counterpart degrades drastically 
or even fails completely.
    		In terms of running speed for generating a batch of 8 images, TDAS reduces the time cost from 1758.6 seconds ($T=2000$) 
    		to 95.8 seconds ($T=100$) on one NVIDIA RTX 3090 GPU,
    		which delivers $18.4\times$ acceleration.
    		{\em SGM}: NCSN++ \cite{song2020score}.
    		More samples in Appendix.
    		}\label{fig: demo_face}
\end{figure}

As an alternative framework to generative adversarial networks (GANs)~\cite{goodfellow2014generative}, score-based generative models (SGMs) have recently demonstrated excellent abilities in data synthesis (\eg, high resolution images) with easier optimization \cite{de2021diffusion} and richer generative diversity \cite{Xiao2021Tackling123}.
Starting from a sample initialized with a Gaussian distribution, a SGM produces a target sample by recurrently adding a learned gradient and a Gaussian noise to this sample, namely {\em the diffusion process}.

Compared to GANs, a significant drawback of existing SGMs is {\em drastically slower synthesis} due to the need for many (\eg, 2000) iterations of sampling and gradient computation.
This is due to an indispensable constraint of using small step sizes, otherwise the SGMs will fail to converge subject to low stability.
We conjecture that this limitation can be alleviated by increasing the similarity between the target data distribution and diffusion related (sample initialization and noise sampling) distributions.
Further, we prove that leveraging the target distribution for sample initialization and noise sampling helps to accelerate the diffusion process until convergence significantly (c.f. Theorem~\ref{thm: deviation}).
This is intuitive as both the starting point and {added noises} are constrained to be relevant w.r.t. the target distribution, hence minimizing the inefficient fluctuations over the diffusion process.
Through orthogonal invariance analysis (Theorem~\ref{Thm: transform}), we prove that the diffusion related (sample initialization and noise sampling) distributions of the existing sampling methods are insensitive to the ordering of coordinates (\eg, the location of pixels in images).
Under this theory, we introduce a model agnostic {\bf \em Target Distribution Aware Sampling} (TDAS) approach.
Taking image generation as a showcase, we implement an instantiation of TDAS.
Concretely, we observe that in the frequency domain the amplitude of an average image is mostly concentrated in the low-frequency range;
In the space domain, some types of target images instead present highly consistent structural and geometric similarity (\eg, facial images).
To exploit these structured priors in the SGMs, we propose a simple yet effective {\em space-frequency filter} to regulate sample initialization and noises.

In this work, we make the following {\bf contributions}:
{\bf (1)} 
{We investigate the largely ignored efficiency limitation with existing SGMs. 
To that end, we provide an orthogonal invariance analysis to prove that previous SGMs are insensitive to the order of coordinates.
Further, we identify increasing the similarity between the target data distribution and these diffusion related distributions serves as an underlying contributor for convergence 
(Theorem~\ref{thm: deviation})}.
{\bf (2)}
{
Under this finding, we propose a \textbf{\em Target Distribution Aware Sampling} (TDAS) approach to improving the convergence efficiency of existing SGMs.
}
{\bf (3)} 
{
We further instantiate a specific diffusion process with 
the proposed TDAS for image generation by exploiting the structural distributional priors in both frequency and space domains.}
{\bf (4)} 
{
Extensive experiments demonstrate the advantages of {TDAS} for (NCSN~\cite{song2019generative} and NCSN++~\cite{song2020score}) in generating quality images at a variety of ($32\times 32$ to $1024 \times 1024$) resolutions.
Critically, for more challenging high resolution image generation tasks, TDAS can reduce up to $20\times$ sampling iterations and accelerate the running speed by up to $18.4\times$ whilst largely maintaining the generation quality. 
Particularly, TDAS can generate much better images in smaller sampling number cases (Fig.~\ref{fig: demo_face}).
}
\section{Related work}

\textbf{SGMs. }
Inspired by non-equilibrium statistical physics,
\cite{sohl2015deep} first proposed to destroy the data distribution through a diffusion process slowly,
and learn the backward process to restore the data. 
Later on, \cite{song2019generative} explored score matching for generative models by introducing the noise conditional score network (NCSN). 
\cite{DBLP:conf/nips/0011E20} further scaled
NCSN for higher resolution image generation (\eg, $256\times 256$)
by scaling noises and improving stability with moving average.
Interestingly, \cite{song2020score} summarized all the previous SGMs into a unified framework based on the stochastic differential equation, and proposed the NCSN++ model to generate high-resolution images via numerical SDE solvers for the first time.
\cite{dockhorn2021score} introduced a more complex critically-damped Langevin diffusion (CLD) based SGMs based on Hamiltonian Monte Carlo methods~\cite{neal2011mcmc}, where the diffuison process is also guided by a coupled velocity dynamic.
On the contrary, \cite{jing2022subspace} proposed to restrict the diffusion process on a series of smaller linear subspaces to reduce the computational cost.
As a variant of SGMs, \cite{DBLP:conf/nips/HoJA20} introduced the denoising diffusion probabilistic models (DDPMs) trained by decreasing the variational bound on generative log-likelihood.
\cite{DBLP:conf/icml/NicholD21} improved the training of DDMPs through learning the noise variance and reducing the gradient noise. 
\cite{ho2021cascaded} developed a cascaded DDPM to generate images at increased resolutions.
Commonly, the existing SGMs use isotropic Gaussian distributions for the diffusion sampling. In this work, we prove that this prevents the SGMs from leveraging the structural distributional priors of target data and leads to slow convergence. We further propose a novel TDAS method to solve this fundamental limitation.

\textbf{Accelerating SGMs. }
Recently there are various works proposed on accelerating diffusion sampling.
\cite{Watson2021LearningTE,bao2022analytic} used a dynamic programming algorithm
to find the optimal discrete time schedules to speed up the sampling of DDPM.
\cite{Tachibana2021ItTaylorSS,JolicoeurMartineau2021GottaGF} utilized second-order numerical scheme to build a faster sampling process of DDPMs. 
\cite{song2020denoising} proposed DDIMs by using a non-Markovian diffusion processes to accelerate the sampling quality of DDPMs.
Later, DDIMs were further advanced by solving a differential equation on a manifold, so that the model can better approximate the ground-truth score functions~\cite{liu2021pseudo}.
\cite{luhman2021knowledge} distill a multi-step diffusion process into a single step via a student-teacher framework~\cite{fukuda2017efficient}.

Whilst speeding up the diffusion process, these existing methods are limited in the following aspects:
{\bf (1)} They focus on the simpler low-resolution generation tasks (\eg, CIFAR-10, CelebA$64\times64$) with less urgent acceleration needs.
It is unclear how scalable when they are applied to the more challenging high-resolution tasks (\eg, FFHQ$1024\times 1024$). 
In practice, the acceleration for low-resolution tasks is much less meaningful, since the speed of SGMs is already fairly good and much faster than the high-resolution case.
{\bf (2)}
All of them restrict to the conventional isotropic covariance matrix of the noises. In contrast, we show that using more general covariance matrix can enhance the performance of SGMs.
{\bf (3)}
Resort to existing SDE solvers or dynamics programming algorithms, they do not cast insightful theory for diffusion sampling acceleration. Instead, we provide a solid theoretical analysis of diffusion process in the frequency domain as well as a principled deviation.

\section{Preliminaries}\label{sec: pre}
\subsection{Score-based generative model (SGMs)}
Score matching~\cite{hyvarinen2005estimation} is a useful method for learning non-normalized statistical models. Given an unknown distribution $p_{\mathbf{x}^{\ast}}$ of vectors $\mathbf{x}^{\ast}\in \mathbb{R}^d$, it allows the model to directly estimate the \textit{score function} $\bigtriangledown_{\mathbf{x}}\log p_{\mathbf{x}^{\ast}}(\mathbf{x})$ (denoted as $\mathbf{s}_{\mathbf{x}^{\ast}}(\mathbf{x})$) at any point $\mathbf{x}$
with i.i.d. samples of $p_{\mathbf{x}^{\ast}}$.
SGMs aim to generate samples from $p_{\mathbf{x}^{\ast}}$ via score matching.
After score matching, SGMs generate a sample from $p_{\mathbf{x}^{\ast}}$ 
by simulating a Langevin dynamics at $T$ steps in a reverse order:
\begin{align}\label{eq: sample}
    \mathbf{x}_{t-1} = \mathbf{x}_t + \frac{\epsilon_t}{2}\mathbf{s}_{\mathbf{x}^{\ast}}(\mathbf{x}_t)+\sqrt{\epsilon_t}\mathbf{z}_t, \; t=T,\dots,1
\end{align}
where $\epsilon_t$ specifies the step size. The initial distribution $\mathbf{x}_T$ is sampled from a given prior distribution and the noise $\mathbf{z}_t$ are i.i.d. samples of the standard $d$-dimensional Gaussian distribution.
Under some regularity conditions, if $\epsilon_t\rightarrow 0,T\rightarrow +\infty$, we have the terminal distribution $p(\mathbf{x}_0)$ that approaches $p_{\mathbf{x}^{\ast}}$~\cite{welling2011bayesian}.
With this process, we transform a sample drawn from an initial Gaussian distribution to approach a desired distribution $p_{\mathbf{x}^{\ast}}$.
 
The first SGM is noise conditional score network (NCSN) \cite{song2019generative}. Denoting the NCSN model as $\mathbf{s}(\mathbf{x},\sigma)$. It is trained to match the score function $\bigtriangledown_{\mathbf{x}}\log p_{\sigma}(\mathbf{x})$, where $p_{\sigma}(\cdot)  = \mathcal{N}(\cdot;\mathbf{x}^{\ast},\sigma^2I_d)$ is a Gaussian distribution with $\sigma$ the variance of the additive noises. As $\sigma$ increases from $0$ to a large enough $\sigma_{L}$, $p_{\sigma}(\cdot)$ will transform gradually from the data distribution $p_{\mathbf{x}^{\ast}}$ to an approximately Gaussian distribution. Instead, the NCSN model $\mathbf{s}(\mathbf{x},\sigma)$ reverses the corruption process, starting from a Gaussian distribution and traveling to the target distribution $p_{\mathbf{x}^{\ast}}$ by an annealed Langevin dynamics.
\cite{song2020score} interpreted SGMs as discrete versions of the various stochastic differential equations (SDEs). 
In this perspective, diffusion sampling can be considered as a process in which we numerically solve a backward SDE. Based on this formula, \cite{song2020score} proposed the NCSN++ model good at high-resolution image generation.

\subsection{Limitation analysis}
Typically, the step size $\epsilon_t$ for sampling has to be small enough for ensuring stability in diffusion. Otherwise, the model will fail to converge to the target distribution~\cite{welling2011bayesian}, similar to the situation when applying a numerical solver for ODE (ordinary differential equation) and SDE.
As a result, existing SGMs suffer from 
{\bf \em a lengthy sampling process} as many iterations are needed.
This is inefficient and unscalable as compared to GANs, especially for the high-resolution image generation.
We conjugate that the inconvergence issue with acceleration is mainly caused by the noise term $\sqrt{\epsilon_t}\mathbf{z}_t$, which is under-controlled when $\epsilon_t$ is too large. 
At every step $t$, the SGM needs to push the sample $\mathbf{x}_t$ closer to the target distribution. The noise term, however, has the risk to counter this effect by 
pulling the sample further from the target distribution.
Intuitionally, suppose we add a noise closer to the target distribution, this drifting risk would be alleviated since it is easier for the SGM to take the control of the sampling process. In this work, we show that the convergence of the sampling process can be sped up dramatically when we constraint the noise term closer to the target distribution both in the spatial and frequency domain.

\section{Method}

\subsection{Motivation}
We consider that the above low-efficiency limitation with existing SGMs is caused by the ignorance of the target data distribution in model inference. Specifically, for both the initial point $\mathbf{x}_T$ and noises $\mathbf{z}_t$, existing models typically use \textbf{\em isotropic Gaussian distributions} (with covariance matrix $\sigma^2I_d$) in inference.
This is consistent with the training's setup arguably considered to be the best option.

However, as we prove in Theorem~\ref{thm: deviation}, if the initial point and the noises added in the diffusion process distribute similarly to the target distribution, the model inference can be favorably accelerated with better stability, {\em without model retraining}.
Critically, it is shown that the distributional shift in sample initialization and noises
w.r.t. the training setup is not harmful at all to the generative quality.
Our theory is in stark contrast to and challenges
the conventional wisdom.

Given the same sampling process as Eq.~\eqref{eq: sample}, we aim to redesign the noise $\mathbf{z}_t$ for model inference in a way that it can still generate samples at high quality under acceleration.
Algorithmically, in contrast to existing alternatives, we do not limit the covariance matrix (\ie, the variance schedule) of the noise to the form of $\sigma^2I$. We leverage the prior knowledge of the target distribution to design more flexible and relevant noises, where {\em the covariance matrix can be any positive definite matrix.
}

\subsection{Diffusion in the frequency domain}
We first show that the sampling process can be equivalently transformed to the frequency domain (Theorem~\ref{Thm: transform}). This facilitates us to explore the prior knowledge of the target task in the frequency domain.
We start with the following lemma.

\begin{lemma}\label{lem transform}
If two $d$-dimensional random vectors $\mathbf{x},\mathbf{y}\in\mathbb{R}^d$ have differentiable density functions, and satisfy $\mathbf{y}=G\mathbf{x}$, where the matrix $G\in\mathbb{R}^{d\times d}$ is invertible, we have
\begin{align*}
    \bigtriangledown_{\mathbf{y}}\log p_{\mathbf{y}}(\mathbf{y})  = \bigtriangledown_{\mathbf{y}}\log p_{\mathbf{x}}(G^{-1}\mathbf{y})=\bigtriangledown_{\mathbf{y}}\log p_{\mathbf{x}}(\mathbf{x}).
\end{align*}
\end{lemma}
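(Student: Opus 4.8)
The plan is to reduce the statement to the standard change-of-variables formula for probability densities, after which both equalities become essentially bookkeeping. The key observations are that an affine-invertible push-forward multiplies the density by a constant Jacobian factor, and that the two right-hand expressions in the lemma denote the same object written in two notations.

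First I would invoke the density transformation rule: since $\mathbf{y} = G\mathbf{x}$ with $G$ invertible and both vectors admit densities, we have $p_{\mathbf{y}}(\mathbf{y}) = p_{\mathbf{x}}(G^{-1}\mathbf{y})\,\lvert \det(G^{-1}) \rvert$ for every $\mathbf{y}$ in the support. Taking logarithms gives $\log p_{\mathbf{y}}(\mathbf{y}) = \log p_{\mathbf{x}}(G^{-1}\mathbf{y}) + \log \lvert \det(G^{-1})\rvert$, where the last term is a finite constant independent of $\mathbf{y}$ (finiteness and constancy both rely on $G$ being invertible, so $\det(G^{-1}) \neq 0$).

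Next I would differentiate with respect to $\mathbf{y}$. The additive constant has zero gradient, so $\nabla_{\mathbf{y}} \log p_{\mathbf{y}}(\mathbf{y}) = \nabla_{\mathbf{y}} \log p_{\mathbf{x}}(G^{-1}\mathbf{y})$, which is the first claimed equality; differentiability of the densities, assumed in the hypothesis, is exactly what licenses passing the gradient through. For the second equality I would simply note that $G^{-1}\mathbf{y} = \mathbf{x}$, so $\log p_{\mathbf{x}}(G^{-1}\mathbf{y})$ and $\log p_{\mathbf{x}}(\mathbf{x})$ name the same scalar function of $\mathbf{y}$; applying the chain rule to $\mathbf{y} \mapsto \log p_{\mathbf{x}}(G^{-1}\mathbf{y})$ shows both expressions equal $(G^{-1})^{\top}(\nabla_{\mathbf{x}}\log p_{\mathbf{x}})(\mathbf{x})$, so the identity is immediate.

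I do not anticipate a genuine obstacle here; the only points needing care are (i) confirming that the Jacobian factor is a nonzero constant so that it drops out under $\nabla_{\mathbf{y}}$ — which is precisely where invertibility of $G$ enters — and (ii) being explicit about the convention that $\nabla_{\mathbf{y}} \log p_{\mathbf{x}}(\mathbf{x})$ means differentiating the composed map $\mathbf{y} \mapsto \log p_{\mathbf{x}}(G^{-1}\mathbf{y})$, so that the third expression in the display is not a separate assertion but the same gradient rewritten after substituting $\mathbf{x} = G^{-1}\mathbf{y}$.
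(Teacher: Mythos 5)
Your argument is correct and follows the same route as the paper's proof: apply the change-of-variables formula $p_{\mathbf{y}}(\mathbf{y}) = p_{\mathbf{x}}(G^{-1}\mathbf{y})\,\lvert\det G\rvert^{-1}$, take logarithms, and observe that the constant Jacobian term vanishes under $\nabla_{\mathbf{y}}$. Your added remark that the second equality is purely notational (both expressions denote the gradient of $\mathbf{y}\mapsto\log p_{\mathbf{x}}(G^{-1}\mathbf{y})$) is a useful clarification the paper leaves implicit.
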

\begin{proof}
Note for any invertible differentiable transformation $g\in \mathbb{R}^d \rightarrow \mathbb{R}^d$, if $\mathbf{y}=g(\mathbf{x})$, we have
\begin{align*}
    p_{\mathbf{y}}(\mathbf{y}) = p_{\mathbf{x}}(g^{-1}(\mathbf{y}))\left |\det\left[ \frac{d g^{-1}(\mathbf{y})}{d\mathbf{y}}\right] \right |.
\end{align*}
In particular, $p_{\mathbf{y}}(\mathbf{y}) = p_{\mathbf{x}}(G^{-1}\mathbf{y})\left | G \right |^{-1}$. We verify the lemma by taking the logarithm and calculating the gradients at both sides of the equation.
\end{proof}

\begin{theorem}\label{Thm: transform}
Suppose $F\in \mathbb{R}^{d\times d}$ is an orthogonal matrix, then the diffusion process (Eq. \eqref{eq: sample})
can be rewritten as
\begin{align}\label{eq:freq_diffusion}
\tilde{\mathbf{x}}_{t-1}  = \tilde{\mathbf{x}}_t + \frac{\epsilon_t}{2}\bigtriangledown_{\tilde{\mathbf{x}}}\log p_{\tilde{\mathbf{x}}^{\ast}}(\tilde{\mathbf{x}}_t)+\sqrt{\epsilon_t}F\mathbf{z}_t,
\end{align}
where $\tilde{\mathbf{x}} = F\mathbf{x}$, and $\{\mathbf{z}_t\}$ do not need to follow isotropic Gaussian distributions.
\end{theorem}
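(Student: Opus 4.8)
The plan is to transform the Langevin update in Eq.~\eqref{eq: sample} by applying the orthogonal matrix $F$ to both sides and showing that the resulting recursion has exactly the stated form. First I would left-multiply Eq.~\eqref{eq: sample} by $F$ to obtain $F\mathbf{x}_{t-1} = F\mathbf{x}_t + \frac{\epsilon_t}{2}F\,\mathbf{s}_{\mathbf{x}^{\ast}}(\mathbf{x}_t) + \sqrt{\epsilon_t}\,F\mathbf{z}_t$, and then use the definition $\tilde{\mathbf{x}} = F\mathbf{x}$ to rewrite the left side as $\tilde{\mathbf{x}}_{t-1}$ and the first term on the right as $\tilde{\mathbf{x}}_t$. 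The noise term $\sqrt{\epsilon_t}\,F\mathbf{z}_t$ already matches the target form, so the only nontrivial piece is the score term: I must show that $F\,\bigtriangledown_{\mathbf{x}}\log p_{\mathbf{x}^{\ast}}(\mathbf{x}_t) = \bigtriangledown_{\tilde{\mathbf{x}}}\log p_{\tilde{\mathbf{x}}^{\ast}}(\tilde{\mathbf{x}}_t)$.

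This score identity is where Lemma~\ref{lem transform} does the work. Applying the lemma with $G = F$, $\mathbf{x} = \mathbf{x}^{\ast}$, $\mathbf{y} = \tilde{\mathbf{x}}^{\ast} = F\mathbf{x}^{\ast}$ gives $\bigtriangledown_{\tilde{\mathbf{x}}}\log p_{\tilde{\mathbf{x}}^{\ast}}(\tilde{\mathbf{x}}) = \bigtriangledown_{\tilde{\mathbf{x}}}\log p_{\mathbf{x}^{\ast}}(\mathbf{x})$. To connect this to $F\,\bigtriangledown_{\mathbf{x}}\log p_{\mathbf{x}^{\ast}}(\mathbf{x})$, I would spell out the chain rule: since $\mathbf{x} = F^{-1}\tilde{\mathbf{x}} = F^{\top}\tilde{\mathbf{x}}$ (orthogonality), we have $\bigtriangledown_{\tilde{\mathbf{x}}} \big[\log p_{\mathbf{x}^{\ast}}(F^{\top}\tilde{\mathbf{x}})\big] = F\,(\bigtriangledown_{\mathbf{x}}\log p_{\mathbf{x}^{\ast}})(F^{\top}\tilde{\mathbf{x}})$, i.e. the Jacobian of $\tilde{\mathbf{x}}\mapsto F^{\top}\tilde{\mathbf{x}}$ is $F^{\top}$, and its transpose $F$ premultiplies the gradient. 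Evaluating at $\tilde{\mathbf{x}} = \tilde{\mathbf{x}}_t$ (equivalently $\mathbf{x} = \mathbf{x}_t$) yields the desired equality, and substituting back completes the derivation.

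The main obstacle — really the only subtlety — is bookkeeping around which gradient operator acts on which variable: Lemma~\ref{lem transform} as stated equates $\bigtriangledown_{\mathbf{y}}\log p_{\mathbf{y}}(\mathbf{y})$ with $\bigtriangledown_{\mathbf{y}}\log p_{\mathbf{x}}(\mathbf{x})$, but the right-hand side there is shorthand for differentiating the composed function $\mathbf{y}\mapsto \log p_{\mathbf{x}}(G^{-1}\mathbf{y})$, which is \emph{not} the same as $\bigtriangledown_{\mathbf{x}}\log p_{\mathbf{x}}$ evaluated at $\mathbf{x}$; the two differ precisely by the factor $F$ coming from orthogonality. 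I would be careful to state this explicitly so that the appearance of $F$ in front of the score in the original coordinates, versus its absence in the tilde coordinates, is transparent. One should also note that $F\mathbf{z}_t$ is no longer isotropic in general (its covariance is $F F^{\top} = I$ if $\mathbf{z}_t$ is standard Gaussian, but the point of the theorem is that we are free to replace $\mathbf{z}_t$ by non-Gaussian or anisotropic noise), which motivates the closing remark that $\{\mathbf{z}_t\}$ need not be isotropic Gaussian — this is a design freedom exposed by the equivalence rather than something requiring proof.
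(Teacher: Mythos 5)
Your proposal is correct and follows essentially the same route as the paper's proof: left-multiply the update by $F$, invoke Lemma~\ref{lem transform} together with the chain rule identity $\bigtriangledown_{\mathbf{x}} = F^{\mathsf{T}}\bigtriangledown_{\tilde{\mathbf{x}}}$, and cancel $FF^{\mathsf{T}} = I_d$ by orthogonality. Your extra care in distinguishing the gradient of the composed map from the gradient evaluated at a point is a welcome clarification of the paper's notation, but it is not a different argument.
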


\begin{proof}
Multiple $F$ at both sides of the the original diffusion process, we have
\begin{align*}
    \tilde{\mathbf{x}}_{t-1}  = \tilde{\mathbf{x}}_t + \frac{\epsilon_t}{2}F\bigtriangledown_{\mathbf{x}}\log p_{\mathbf{x}^{\ast}}(\mathbf{x}_t)+\sqrt{\epsilon_t}F\mathbf{z}_t.
\end{align*}
By Lemma~\ref{lem transform}, we have $\bigtriangledown_{\tilde{\mathbf{x}}}\log p_{\tilde{\mathbf{x}}^{\ast}}(\tilde{\mathbf{x}}) = \bigtriangledown_{\tilde{\mathbf{x}}}\log p_{\mathbf{x}^{\ast}}(\mathbf{x})$. We also have $\bigtriangledown_{\mathbf{x}} = F^{\mathsf{T}}\bigtriangledown_{\tilde{\mathbf{x}}}$ by the chain rule. Putting all the things together, we have
\begin{align*}
    \tilde{\mathbf{x}}_{t-1}  = \tilde{\mathbf{x}}_t + \frac{\epsilon_t}{2}FF^{\mathsf{T}}\bigtriangledown_{\tilde{\mathbf{x}}}\log p_{\tilde{\mathbf{x}}^{\ast}}(\tilde{\mathbf{x}}_t)+\sqrt{\epsilon_t}F\mathbf{z}_t.
\end{align*}
As $F$ is orthogonal, $FF^{\mathsf{T}} = I_d$. We thus finish the proof.
\end{proof}

\begin{remark}
According to this theorem, we can directly transform a diffusion process by an orthogonal transformation, with the only difference that the noise is altered to $F\mathbf{z}_t$. Particularly, if $\mathbf{z}_t$ obeys an isotropic Gaussian distribution applied by existing methods,
$F\mathbf{z}_t$ obeys the exact the same distribution as $\mathbf{z}_t$.
Hence, \textbf{the existing diffusion sampling methods are orthogonally invariant.} 
\end{remark}

As {\em any permutation matrix} is orthogonal, a diffusion process using the noises in istropic Gaussian distributions \textbf{\em ignores the coordinate order in data}. This property might be reasonable for {\em unstructured} data, but not for {\em structured data} typical in various applications such as natural images where the coordinate order represents the location information of individual pixels locally and the semantic structures globally.
Hence, overlooking the coordinate order in diffusion could be detrimental.

More specifically, consider the popular image generation task. Denote an image dataset $\mathcal{D} = \{\mathbf{x}^{(i)}\in\mathbb{R}^{C\times H\times W},i=1,\dots,N\}$, where $C,H$ and $W$ are the channel number, height and width of images, respectively. Substitute $F$ in Eq.~\eqref{eq:freq_diffusion} by the the two-dimensional type II discrete cosine transform (DCT) that is denoted as $D[\cdot]$ (see Sec.~\ref{sec:dct} in the Appendix for a full definition), we have the diffusion process in the frequency domain as
\begin{align}\label{eq:dct_diffusion}
    \tilde{\mathbf{x}}_{t-1}  = \tilde{\mathbf{x}}_t + \frac{\epsilon_t}{2}\bigtriangledown_{\tilde{\mathbf{x}}}\log p_{\tilde{\mathbf{x}}^{\ast}}(\tilde{\mathbf{x}}_t)+\sqrt{\epsilon_t}D[\mathbf{z}_t].
\end{align}
It is a fact that the amplitude of low-frequency signals of natural images is typically much higher than that of high-frequency ones~\cite{Schaaf1996ModellingTP}. That is, the coordinates of high-frequency part of the target distribution $p_{\tilde{\mathbf{x}}^{\ast}}$ have much less scales than that of the low-frequency part. Differently, the noise term $\sqrt{\epsilon_t}D[\mathbf{z}_t]$ added by the existing SGMs exhibit the same scales over the coordinates in the frequency domain.
This is because obeying the isotropic Gaussian,
$\sqrt{\epsilon_t}\mathbf{z}_t$ and $\sqrt{\epsilon_t}D[\mathbf{z}_t]$ share the same distribution.
In the following, we will address this issue by reducing this coordinate-wise gap between $p_{\tilde{\mathbf{x}}^{\ast}}$ and $\sqrt{\epsilon_t}D[\mathbf{z}_t]$ using the prior knowledge from the target distribution.

\subsection{Target distribution aware sampling}
\label{sec: filter}

We introduce the idea of {\bf \em Target Distribution Aware Sampling} (TDAS) for efficient SGMs.
Specifically, we make sample initialization and noise sampling be aware of the target data distribution so that the diffusion process can be accelerated during model inference.

With the above theoretical findings, as an instantiation we formulate 
a \textbf{\em space-frequency filter} operation to implement TDAS for image generation.
It regulates the distributions of sample initialization 
and additive noises in both space and frequency domains concurrently.
Formally, we regulate the initial sample $\mathbf{x}_T$ as
\begin{equation}
    \bm{\eta}_T =  D^{-1}[M_{\text{freq}}\odot D\big[M_{\text{space}}\odot \mathbf{x}_T]\big],
\end{equation}
and each additive noise $\mathbf{z}_t$ as 
\begin{equation}
    \bm{\eta}_t = D^{-1}\big[M_{\text{freq}}\odot D[M_{\text{space}}\odot \mathbf{z}_t]\big],
\end{equation}
where $\mathbf{x}_T,\mathbf{z}_t\sim \mathcal{N}(0,I_{C\times H\times W})$ and $\odot$ denotes element-wise product. 
The linear operator $D^{-1}$ is the inverse of two-dimensional DCT $D$.
Critically, the space $M_{\text{space}}$ and frequency $M_{\text{freq}}$ filters, the key design of TDAS, are {\em target distribution aware} with the same shape as $\mathbf{x}_T$.
Their functions here are to regulate the coordinates of initial sample and noises in their respective domain,
subject to the prior structural information of image distribution. 
Note that now the covariance matrix of $\bm{\eta}_t$ can be any general positive definite matrix.
Seemingly, TDAS had a risk of changing the convergence distribution of the diffusion process, according to an analysis of its Fokker-Plank equation~\cite{gardiner1985handbook}.
This analysis is only verified by assuming that (1) the step size is small enough and (2) the SGM learns the exact score function. However, both are invalid in our context: the step size is large under acceleration, and the SGMs often only approximate the ground-truth score function near a sub-manifold~\cite{liu2021pseudo}.
Hence we consider the deviation from the target distribution as analyzed above dominates and
forms a major obstacle for fast convergence (Theorem~\eqref{thm: deviation}).

{\bf Frequency filter. }
To leverage the statistical frequency prior of images that 
the low frequency signals have much larger amplitude 
than the high frequency counterpart~\cite{Schaaf1996ModellingTP},
we design $M_{\text{freq}}$ as:
\begin{align}\label{eq: freq_mask}
M_{\text{freq}}(c,h,w) = 
\left\{\begin{matrix}
 1&,& h^2+w^2 \leq 2 r_{\text{th}}^2\\
\lambda&,&\text{otherwise}
\end{matrix}\right.,1\leq c \leq C,1\leq h \leq H,1\leq w \leq W,
\end{align}
where $r_{\text{th}}$ is the threshold radial and $\lambda \in (0,1)$ specifies the rate at which we suppress the high frequency zone. In this way, initial sample and noises $\{\bm{\eta}_t\}_{t=1}^{T}$ all exhibit a similar frequency distribution as general images,
\ie, target distribution awareness. 
Besides, considering that the high-frequency details are most challenging to learn~\cite{xu2019training,xu2019freq},
suppressing their amplitude implies some reduction in the generative task difficulty.
For more fine-grained regulation, we can design multiple zones each with a dedicated suppressing rate. 
Without expensive grid search, we find it is empirically easy to tune the parameters based on the statistics of the natural images and the samples generated by the vanilla SGMs in the frequency domain (see Sec.~\ref{sec:freq_analysis} in Appendix for more details).

{\bf Space filter.}
For space filter, we consider two situations.
When the SGM (\eg, NCSN) is limited in learning spatial structure information, or the target image data exhibit consistent structure priors
(\eg, facial images),
we exploit the {\em average} statistics.
Specifically, we design the space filter operator with an image set $\mathcal{D}$ as: 
\begin{align*}
    M_{\text{space}}(c,h,w) = \rho\Big(\frac{1}{\left |\mathcal{D} \right |}\sum_{\mathbf{x}^{(i)} \in \mathcal{D}} \left | \mathbf{x}^{(i)}(c,h,w) \right | \Big),1\leq c \leq C,1\leq h \leq H,1\leq w \leq W,
\end{align*}
where $\rho(\cdot)\in\mathbb{R}\rightarrow\mathbb{R}$ is a non-negative 
monotonically increasing function. We choose $\rho(x) = \log(1+x)$ so that the scales do not vary dramatically across different coordinates. In practice, to make it easier to control the step size we also normalize and soften $M_{\text{space}}$ for stability; see Sec.~\ref{sec:freq_analysis} in Appendix for more detail.
In otherwise cases, we simply set all the elements of $M_{\text{space}}$ to 1.

It is worthwhile noting that the original sampling method
is a special case of our formulation, when setting both operators to all-1 matrix.
We summarize our method in Alg.~\ref{algo: sample}.

\begin{algorithm}[tb]
  \caption{Target distribution aware sampling}
  \label{algo: sample}
\begin{algorithmic}
  \State {\bfseries Input:} 
  The space
  $M_{\text{space}}$ and frequency $M_{\text{freq}}$ filters,
  the sampling iterations $T$; 
  \State {\bfseries Diffusion process:}
  \State Drawing an initial sample $\mathbf{z}_T \sim \mathcal{N}(0,I_{C\times H \times W})$
  \State {\em Applying TDAS}: $\mathbf{x}_T \leftarrow D^{-1}[M_{\text{freq}}\odot D[M_{\text{space}}\odot \mathbf{z}_T]]$
  \For{$t=T$ {\bfseries to} $1$}
      \State Drawing a noise $\mathbf{z}_t\sim \mathcal{N}(0,I_{C\times H \times W})$
      \State {\em Applying TDAS}: $\bm{\eta}_t \leftarrow D^{-1}[M_{\text{freq}}\odot D[M_{\text{space}}\odot \mathbf{z}_t]]$
        \State Diffusion $\mathbf{x}_{t-1} \leftarrow \mathbf{x}_t + \frac{\epsilon_t}{2}\mathbf{s}_{\mathbf{x}^\ast}(\mathbf{x}_t)+\sqrt{\epsilon_t}\bm{\eta}_t$
  \EndFor
  \State {\bfseries Output:} $\mathbf{x}_0$
\end{algorithmic}
\end{algorithm}

\begin{remark}
For the computational complexity of TDAS, the major overhead is from DCT and its inverse. Their computational complexity is $O(d\log d)$ by using Fast Fourier Transform (FFT). For an image $\mathbf{x}\in\mathbb{R}^{C\times H\times W}$, the complexity per iteration in the sampling process is $O(CHW(\log H+\log W) )$, which is marginal compared to the whole diffusion complexity.
\end{remark}

\subsection{Deviation analysis}\label{sec: deviation}
We provide a deviation analysis of the diffusion process (Eq.~\eqref{eq: sample}) to clarify the effect of TDAS on convergence.
\begin{theorem}\label{thm: deviation}
Suppose the diffusion process (Eq.~\eqref{eq: sample}) converges to a distribution $\mathbf{x}^{\ast} \sim p_{\mathbf{x}^{\ast}}$. Denoting $\mathcal{F}_{-t}$ as the $\sigma$-algebra generated by $\{\mathbf{x}_T,\mathbf{z}_s,s=T,\dots,t+1\}$. If the additive noises $\{\mathbf{z}_s\}^{T}_{s=t+1}$ depend on $\mathbf{x}_t$ and satisfy the condition $\mathbb{E}\left [\mathbf{z}_t \mid \mathcal{F}_{-t}\right] = 0,\forall t \in \{T,\dots,1\}$, then the deviation of $\mathbf{x}_t$ from $\mathbf{x}^{\ast}$
can be written as
\begin{align*}
\mathbb{E}\left [ \left \|\mathbf{x}^{\ast} -\mathbf{x}_{t-1} \right \|^2 \right]=C_1+\epsilon_t\mathbb{E}\left [ \left \|\mathbf{z}_t \right \|^2 \right]
       -2\sqrt{\epsilon_t}\mathbb{E}\left [ \mathbf{x}^{\ast}\cdot \mathbf{z}_t \right],
\end{align*}
where the term $C_1$ is a constant independent of $\mathbf{z}_t$. 
\end{theorem}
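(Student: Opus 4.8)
The plan is to expand the squared norm $\|\mathbf{x}^{\ast} - \mathbf{x}_{t-1}\|^2$ directly using the one-step update rule from Eq.~\eqref{eq: sample}, namely $\mathbf{x}_{t-1} = \mathbf{x}_t + \frac{\epsilon_t}{2}\mathbf{s}_{\mathbf{x}^{\ast}}(\mathbf{x}_t) + \sqrt{\epsilon_t}\mathbf{z}_t$. Writing $\mathbf{u}_t := \mathbf{x}_t + \frac{\epsilon_t}{2}\mathbf{s}_{\mathbf{x}^{\ast}}(\mathbf{x}_t)$, which is $\mathcal{F}_{-t}$-measurable, we have $\mathbf{x}^{\ast} - \mathbf{x}_{t-1} = (\mathbf{x}^{\ast} - \mathbf{u}_t) - \sqrt{\epsilon_t}\mathbf{z}_t$, so
\begin{align*}
\left\|\mathbf{x}^{\ast} - \mathbf{x}_{t-1}\right\|^2 = \left\|\mathbf{x}^{\ast} - \mathbf{u}_t\right\|^2 + \epsilon_t\left\|\mathbf{z}_t\right\|^2 - 2\sqrt{\epsilon_t}\,(\mathbf{x}^{\ast} - \mathbf{u}_t)\cdot\mathbf{z}_t.
\end{align*}
Taking expectations term by term, the first term becomes the constant $C_1 := \mathbb{E}\left[\left\|\mathbf{x}^{\ast} - \mathbf{u}_t\right\|^2\right]$, which depends only on $\mathbf{x}_t$, $\epsilon_t$ and the (fixed) score function, hence is independent of the choice of $\mathbf{z}_t$ as claimed.

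Next I would handle the cross term $\mathbb{E}\left[(\mathbf{x}^{\ast} - \mathbf{u}_t)\cdot\mathbf{z}_t\right]$. The key observation is that $\mathbf{u}_t$ is $\mathcal{F}_{-t}$-measurable by construction, so by the tower property and the hypothesis $\mathbb{E}\left[\mathbf{z}_t \mid \mathcal{F}_{-t}\right] = 0$,
\begin{align*}
\mathbb{E}\left[\mathbf{u}_t\cdot\mathbf{z}_t\right] = \mathbb{E}\left[\mathbf{u}_t\cdot\mathbb{E}\left[\mathbf{z}_t\mid\mathcal{F}_{-t}\right]\right] = 0.
\end{align*}
Therefore $\mathbb{E}\left[(\mathbf{x}^{\ast} - \mathbf{u}_t)\cdot\mathbf{z}_t\right] = \mathbb{E}\left[\mathbf{x}^{\ast}\cdot\mathbf{z}_t\right]$, and substituting back gives exactly the stated identity
\begin{align*}
\mathbb{E}\left[\left\|\mathbf{x}^{\ast} - \mathbf{x}_{t-1}\right\|^2\right] = C_1 + \epsilon_t\mathbb{E}\left[\left\|\mathbf{z}_t\right\|^2\right] - 2\sqrt{\epsilon_t}\,\mathbb{E}\left[\mathbf{x}^{\ast}\cdot\mathbf{z}_t\right].
\end{align*}

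The main subtlety — and the step I would be most careful about — is the precise meaning of the convergence hypothesis and how $\mathbf{x}^{\ast}$ relates to the finite-time iterate $\mathbf{x}_{t-1}$. The statement says the process "converges to" $\mathbf{x}^{\ast}\sim p_{\mathbf{x}^{\ast}}$; for the algebra above to make sense verbatim, one wants $\mathbf{x}^{\ast}$ to be a fixed random variable against which the deviation is measured, and crucially $\mathbf{x}^{\ast}$ should be independent of the "fresh" noise $\mathbf{z}_t$ given the past (or at least the cross terms involving $\mathbf{u}_t$ must vanish as shown). I would make explicit that $\mathbf{x}^{\ast}$ can be taken either as a limit random variable coupled to the process or, more simply, that the identity holds for each finite $t$ with $\mathbf{x}^{\ast}$ any target-distributed variable whose conditional-mean interaction with $\mathbf{z}_t$ is governed solely by the $\mathbb{E}[\mathbf{x}^{\ast}\cdot\mathbf{z}_t]$ term — everything else in the expansion is either the $\mathbf{z}_t$-independent constant $C_1$ or kills itself via $\mathbb{E}[\mathbf{z}_t\mid\mathcal{F}_{-t}]=0$. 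Once that bookkeeping is pinned down, the proof is a one-line orthogonal decomposition; the only real content is the martingale-difference property of the regulated noise, which is precisely the hypothesis TDAS is designed to preserve (the space-frequency filter is a fixed linear map applied to a zero-mean Gaussian, so $\mathbb{E}[\bm{\eta}_t\mid\mathcal{F}_{-t}]=0$ still holds).
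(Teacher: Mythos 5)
Your proof is correct and follows essentially the same route as the paper's: expand the square around the $\mathcal{F}_{-t}$-measurable drift term $\mathbf{u}_t=\mathbf{x}_t+\frac{\epsilon_t}{2}\mathbf{s}_{\mathbf{x}^{\ast}}(\mathbf{x}_t)$, and eliminate the $\mathbf{u}_t\cdot\mathbf{z}_t$ cross term via the hypothesis $\mathbb{E}[\mathbf{z}_t\mid\mathcal{F}_{-t}]=0$ together with the tower property (the paper does the same computation, just conditioning first and taking the outer expectation at the end). Your closing remarks on how $\mathbf{x}^{\ast}$ should be interpreted address a point the paper leaves implicit, but they do not change the argument.
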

\begin{proof}
For the conditional deviation, we have
\begin{align*}
        &\mathbb{E}\left [ \left \|\mathbf{x}^{\ast} -\mathbf{x}_{t-1} \right \|^2 \bigg | \mathcal{F}_{-t} \right]
     = \mathbb{E}\left [ \left \|\mathbf{x}^{\ast} -\mathbf{x}_t - \frac{\epsilon_t}{2}\mathbf{s}_{\mathbf{x}^{\ast}}(\mathbf{x}_t)-\sqrt{\epsilon_t}\mathbf{z}_t \right \|^2 \bigg | \mathcal{F}_{-t} \right]\\
      &=\mathbb{E}\left [ \left \|\mathbf{x}^{\ast} -\mathbf{x}_t  - \frac{\epsilon_t}{2}\mathbf{s}_{\mathbf{x}^{\ast}}(\mathbf{x}_t) \right \|^2 \bigg | \mathcal{F}_{-t} \right]
      +\epsilon_t\mathbb{E}\left [ \left \|\mathbf{z}_t    \right \|^2 \bigg | \mathcal{F}_{-t} \right]-2\sqrt{\epsilon_t}\mathbb{E}\left [ \mathbf{x}^{\ast}\cdot \mathbf{z}_t  \bigg | \mathcal{F}_{-t} \right]
\end{align*}
The last equation is due to
\begin{align*}
    \mathbb{E}\left [ \big(\mathbf{x}_t+ \frac{\epsilon_t}{2}\mathbf{s}_{\mathbf{x}^{\ast}}(\mathbf{x}_t)\big)\cdot \mathbf{z}_t \bigg | \mathcal{F}_{-t} \right]
    =\big(\mathbf{x}_t+ \frac{\epsilon_t}{2}\mathbf{s}_{\mathbf{x}^{\ast}}(\mathbf{x}_t)\big)\cdot \mathbb{E}\left [ \mathbf{z}_t \bigg | \mathcal{F}_{-t} \right]=0.
\end{align*}
Then we take the expectation of both sides
\begin{align*}
    \mathbb{E}\left [ \left \|\mathbf{x}^{\ast} -\mathbf{x}_{t-1} \right \|^2 \right]
     =\mathbb{E}\left [ \left \|\mathbf{x}^{\ast}-\mathbf{x}_t  - \frac{\epsilon_t}{2}\mathbf{s}_{\mathbf{x}^{\ast}}(\mathbf{x}_t) \right \|^2 \right] +\epsilon_t\mathbb{E}\left [ \left \|\mathbf{z}_t \right \|^2 \right]-2\sqrt{\epsilon_t}\mathbb{E}\left [ \mathbf{x}^{\ast}\cdot \mathbf{z}_t \right],
\end{align*}
and the theorem is proved.
\end{proof}
\begin{remark}
If the condition $\mathbb{E}\left [\mathbf{z}_t \mid \mathcal{F}_{-t}\right] = 0$ is replaced by a weaker one $\mathbb{E}\left [\mathbf{z}_t \right] = 0$, the claim is not true.
The condition $\mathbb{E}\left [\mathbf{z}_t \mid \mathcal{F}_{-t}\right]$ means that at the step $t$, given $\{\mathbf{x}_T,\dots,\mathbf{x}_{t+1}\}$, the conditional expectation of $\mathbf{z}_t$ is $0$. Given that the diffusion process of all existing SGMs satisfies this setting, our theorem applies generally.
\end{remark}
Suppose we keep the noise variance $\mathbb{E}\left [ \left \|\mathbf{z}_t \right \|^2 \right]$ unchanged, $\mathbf{z}_t$ impacts the deviation only through the correlation term $-2\sqrt{\epsilon_t}\mathbb{E}\left [ \mathbf{x}^{\ast}\cdot \mathbf{z}_t \right]$. 
This suggests that if we have the elements of $\mathbf{z}_t$ positively correlate with the corresponding elements of $\mathbf{x}^{\ast}$, this deviation will decrease, encouraging the diffusion convergence. As normally $\epsilon \ll 1$, the correlation term is more influential than the variance term $\mathbb{E}\left [ \left \|\mathbf{z}_t \right \|^2 \right]$. Therefore, if we replace the noise $\mathbf{z}_t$ with one that has higher similarity to the target distribution, the convergence will become both more stable and faster. 
Similarly, let $t=T$, higher distributional similarity of the starting point $\mathbf{x}_T$ with $\mathbf{x}^{\ast}$ also promotes convergence. 
\section{Experiments}

\subsection{Experimental setup}

\textbf{Datasets.}
We use 
CIFAR-10~\cite{krizhevsky2009learning}, CelebA~\cite{liu2015faceattributes} LSUN (bedroom and church)~\cite{yu2015lsun}, and FFHQ~\cite{karras2019style} datasets. 
We adopt the same preprocessing as~\cite{song2019generative,song2020score}.

\textbf{SGMs.}
We evaluate TDAS on two representative SGMs including NCSN~\cite{song2019generative} and NCSN++~\cite{song2020score}~(Sec.~\ref{sec: pre}). 
For dedicated evaluation, we focus on comparing our TDAS with the original ({\bf \em vanilla}) sampling methods in their optimal setup, \eg, the suggested step size.

\textbf{Implementation.}
We conduct all experiments with PyTorch, on NVIDIA RTX 3090 GPUs.
We use the public codebase of
NCSN\footnote{\url{ https://github.com/ermongroup/ncsn}}
and NCSN++\footnote{\url{https://github.com/yang-song/score\_sde}}.
We use the released checkpoints of NCSN and NCSC++ for all datasets (except CelebA for which there is no released checkpoint, and we train by ourselves with the released codes instead).
For fair comparison, we only replace the vanilla sampling method with our TDAS whilst keeping the remaining unchanged during inference for a variety of sampling iterations
For TDAS, we apply the average space filter for all three diffusion baselines on FFHQ and NSCN on CIFAR-10,
whilst the identity space filter for the remaining.
Whenever reducing the iterations $T$, we expand the step size proportionally
for consistent accumulative update.

\subsection{Evaluation on high resolution image generation}
{\bf LSUN. }
Using NCSN++ trained on LSUN (church and bedroom) ($256\times256$), 
it is shown in Fig.~\ref{fig:bedroom_church}: 
(1) 
When reducing the sampling iterations from the default 2000 to 400, the vanilla sampling method degrades the generation quality significantly.
(2) On the contrary, our TDAS can mitigate this problem well
based on our target distribution aware sampling idea.

{\bf FFHQ. }
For more challenging high resolution image generation, we evaluate TDAS on FFHQ dataset ($1024\times 1024$). 
We use the state-of-the-art diffusion method NCSN++.
We reduce the default sampling iterations 2000 down to 90.
We draw several observations from Fig.~\ref{fig: demo_face}:
(1) With thousands of sampling iterations, both sampling methods
yield highly realistic facial images. However, this is computationally expensive.
(2) When the sampling iterations reduced to 125, the vanilla method 
already fails to produce high-frequency details. Further reduction to 100
causes a complete failure.
(3) On the contrary, under the same sampling reduction our TDAS can still produce facial images with marginal quality degradation.
This suggests {\em an even bigger advantage of our method comparing to low-resolution image generation.}

\textbf{Quantification. }
We compare the clean-FID scores~\cite{parmaraliased} of FFHQ ($1024\times1024$) samples generated by NCSN++~\cite{song2020score} without and with our TDAS. For each case, we generate 50,000 samples under 100 iterations.
Table~\ref{tab:ffhq_fid} shows that TDAS is highly effective in improving the quality of high-resolution images generated under acceleration.
To the best of our knowledge, this is the first reported FID evaluation of SGMs on the challenging FFHQ ($1024\times1024$) image generation task.

\begin{table}[h]
\vspace{-1ex}
\begin{center}
\caption{FID scores of NCSN++ with and without TDAS on generating facial images. Dataset: FFHQ. Resolution: $1024\times1024$. Iterations: 100.}\label{tab:ffhq_fid}
   \begin{tabular}{ccc}
\toprule[2pt]
    &  w/o TDAS & w/ TDAS \\ \midrule
FID & 442.82                  & \textbf{38.68}       \\ \bottomrule[2pt]
\end{tabular} 
\end{center}
\vspace{-2ex}
\end{table}

\textbf{Running speed}
We compare the running speed between the vanilla and our TDAS in NCSN++ on one NVIDIA RTX 3090 GPU.
As shown in Table \ref{tab: time}, our TDAS can significantly reduce the running cost, particularly for high resolution image generation on FFHQ dataset.

\begin{table}[ht]
\vspace{-3ex}
\centering
\caption{Comparing the running time of generating a batch of $8$ images  with various resolutions using NCSN++~\cite{song2020score}. We set the iterations of TDAS so that it can generate images of equal or better quality than the vanilla NCSN++. The number in the bracket means the iteration number.
{\em Time unit:} Seconds.}
\label{tab: time}
\setlength{\tabcolsep}{4mm}{
\begin{tabular}{llll}
\toprule[2pt]
              \multicolumn{1}{c}{Dataset} & \multicolumn{1}{c}{CelebA} & \multicolumn{1}{c}{LSUN}  & \multicolumn{1}{c}{FFHQ} \\ 
             \multicolumn{1}{c}{Resolution} & \multicolumn{1}{c}{$64\times64$} & \multicolumn{1}{c}{$256\times256$}  & \multicolumn{1}{c}{$1024\times1024$} \\ \midrule
\multicolumn{1}{c}{Vanilla}     & \multicolumn{1}{c}{33.5 (333)}  & \multicolumn{1}{c}{1021.2 (2000)} & \multicolumn{1}{c}{1758.6 (2000)}     \\
\multicolumn{1}{c}{\bf TDAS}    & \multicolumn{1}{c}{\textbf{7.9} (66)}      & \multicolumn{1}{c}{\textbf{208.4} (400)}  & \multicolumn{1}{c}{\textbf{95.8} (100)}      \\ \midrule
\multicolumn{1}{c}{\em Speedup times} & \multicolumn{1}{c}{4.2} & \multicolumn{1}{c}{4.9}   & \multicolumn{1}{c}{18.4}      \\ \bottomrule[2pt]
\end{tabular}}
\end{table}

\begin{figure}[h]
		\hspace{-2ex}
		\centerline{\includegraphics[scale=0.155]{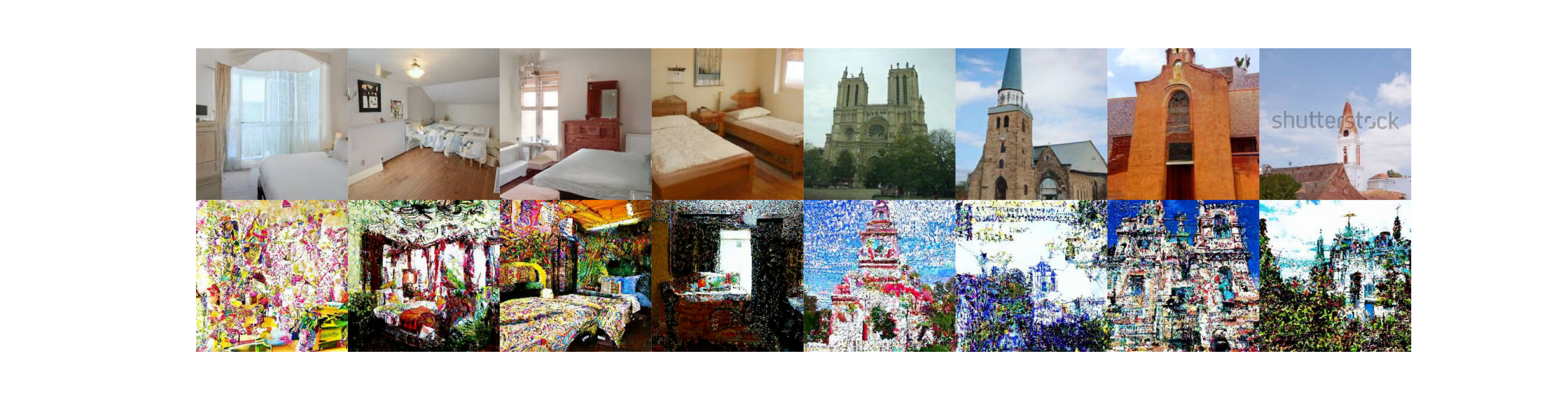}}
		\vspace{-3ex}
		\caption{LSUN (bedroom and church) images generated by TDAS and and vanilla sampling (bottom) under $400$ sampling iterations using NCSN++~\cite{song2020score}. More examples in Appendix.}\label{fig:bedroom_church}
\end{figure}

\textbf{Ablation studies}
We investigate the effect of adjusting the initial distribution and noise distribution separately.
We use NCSN++, with the sampling iterations $133$. Fig.~\ref{Fig: ablation2_half} (a) demonstrates samples produced by TDAS, TDAS without adjusting initial distribution, TDAS without adjusting noise distribution and original sampling method on FFHQ. 
It is observed that adjusting the noise distribution plays the main role in convergence acceleration.

We also investigate the effect of both $M_{\text{space}}$ and $M_{\text{freq}}$ separately. 
We use NCSN++, with the sampling iterations $133$.
Fig.~\ref{Fig: ablation2_half} (b) shows the samples produced by TDAS on FFHQ, where we remove $M_{\text{space}}$, or $M_{\text{freq}}$ or both (\ie, the vanilla sampling method). It is observed that both filters can enhance the quality of image synthesis, and the space one
plays a bigger role in this case.

\begin{figure}[ht]
		\begin{subfigure}{0.48\linewidth}
		\includegraphics[scale=0.226]{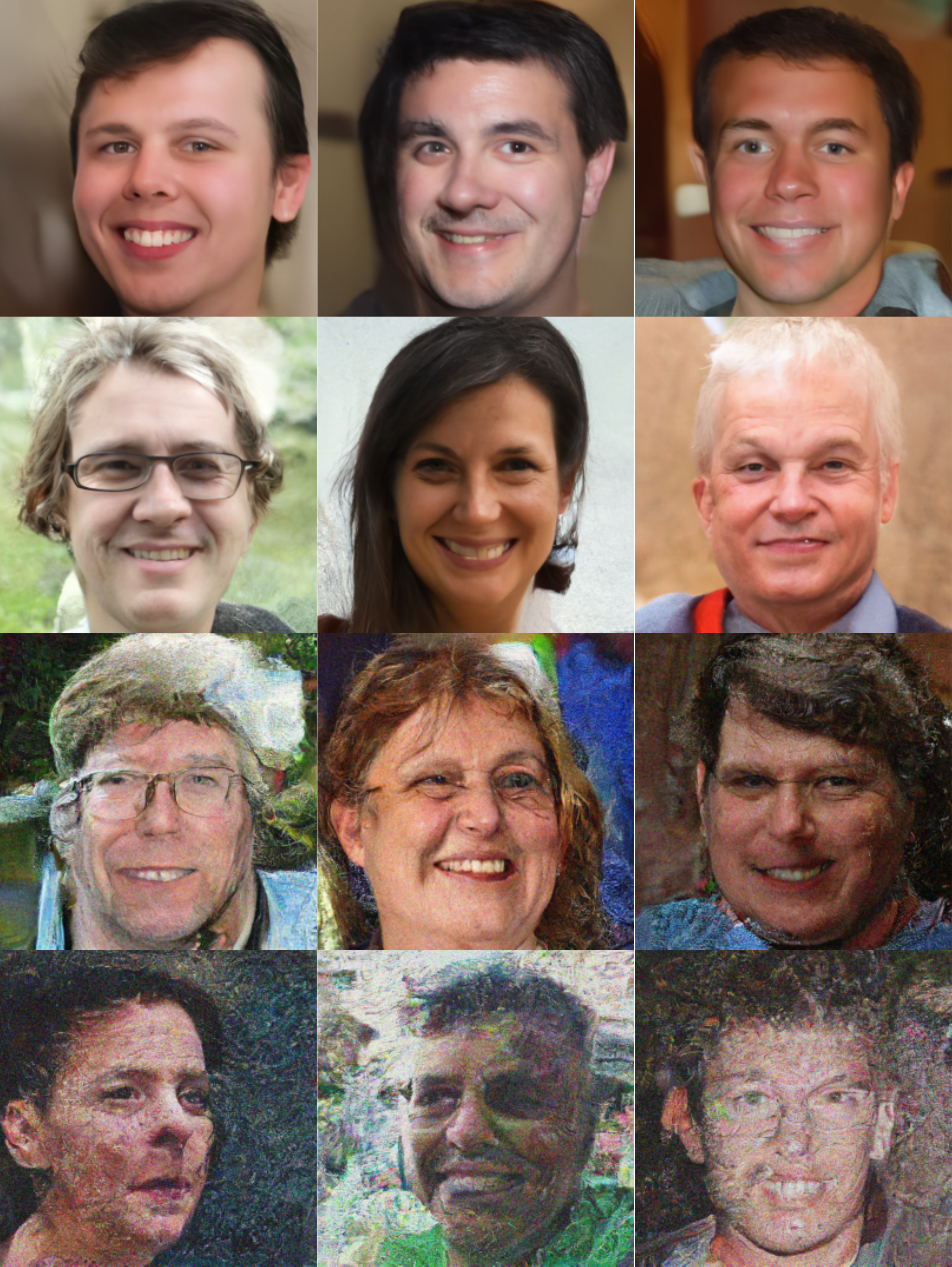}
		\caption{}
		\end{subfigure}
		\hspace{2em}
		\begin{subfigure}{0.48\linewidth}
        \includegraphics[scale=0.226]{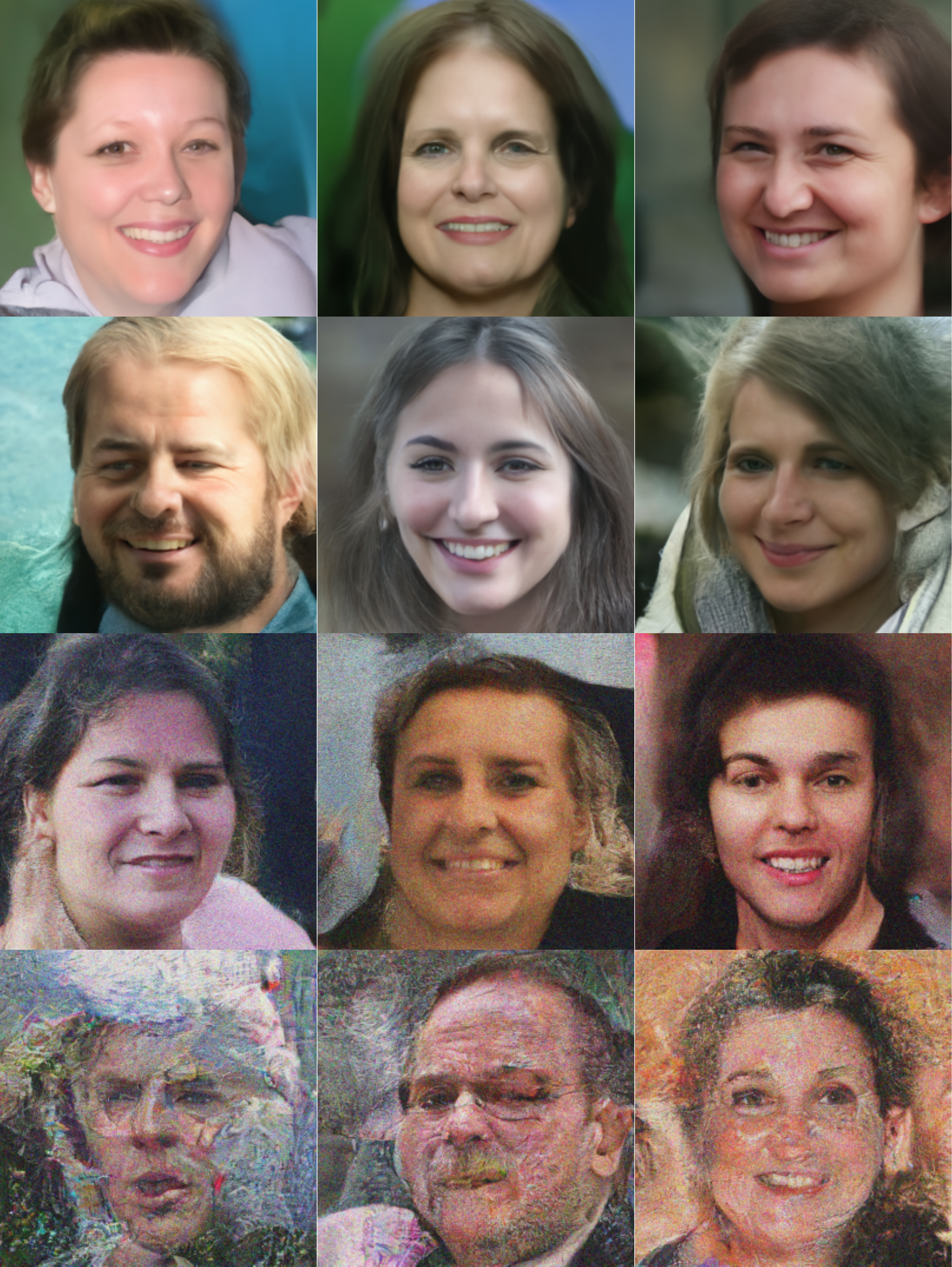}
        \caption{}
		\end{subfigure}
		\caption{Facial images produced by different versions of TDAS and the vanilla sampling on FFHQ ($1024\times 1024$). We use NCSN++~\cite{song2020score} with $T = 133$ sampling iterations. (a) From the top to bottom: TDAS, TDAS without regulating the initial distribution, TDAS without regulating the noise distribution, and the vanilla sampling. (b) From the top to bottom: TDAS, TDAS without frequency filter, TDAS without space filter, and the vanilla sampling.
		}\label{Fig: ablation2_half}
\end{figure}
\vspace{-1ex}

\subsection{Evaluation on low resolution image generation}

{\bf CelebA. }
We conduct both sample demonstration and the FID~\cite{heusel2017gans} scores on CelebA ($64\times64$) as shown in Fig~\ref{fig:celeba}. We apply both space and frequency filters for NCSN++.
It is observed that TDAS can dramatically reduce the FID compared to the vanilla NCSN++.

{\bf CIFAR-10. }
We conduct an FID~\cite{heusel2017gans} score based quantitative evaluation on CIFAR-10. 
We apply both space and frequency filters for NCSN,
and NCSN++.
It is observed in Table~\ref{tab:ncsn_cifar10} that TDAS helps with reducing FID scores across all sampling iterations.
Note, given low-resolution images in CIFAR-10, the benefit of space-frequency filter is relatively less than the cases with higher resolutions (\ie, LSUN, CelebA and FFHQ).
\begin{table}[h]
\vspace{-2ex}
\caption{FID scores of the SGMs with and without TDAS under different iterations on CIFAR-10 generated by NCSN (a) and NCSN++ (b).}\label{tab:ncsn_cifar10}
\begin{subtable}[t]{0.49\linewidth}
\caption{NCSN~\cite{song2019generative}}
\setlength{\tabcolsep}{4mm}{
\begin{tabular}{ccc}
\toprule[2pt]
\textbf{Iterations} & w/o TDAS & w/ TDAS \\ \midrule
1000       & 25.07   & \textbf{23.56}          \\
500       & 38.53    & \textbf{27.30}          \\
333       & 53.25    & \textbf{40.71}          \\
250       & 67.95    & \textbf{56.43}          \\
200       & 83.14    & \textbf{72.92}          \\
\bottomrule[2pt]
\end{tabular}}
\end{subtable}
\hspace{1em}
\begin{subtable}[t]{0.49\linewidth}
\caption{NCSN++~\cite{song2020score}}
\setlength{\tabcolsep}{4mm}{
\begin{tabular}{ccc}
\toprule[2pt]
\textbf{Iterations} & w/o TDAS & w/ TDAS \\ \midrule
200       & 4.35     & \textbf{2.97}          \\
166       & 6.24     & \textbf{2.78}          \\
125       & 12.70    & \textbf{3.10}          \\
111       & 19.01    & \textbf{5.67}          \\
100       & 29.39    & \textbf{7.78}          \\ \bottomrule[2pt] 
\end{tabular}}
\end{subtable}

\end{table}

\vspace{-4ex}
\begin{figure}[h]
        \hspace{-14ex}
        \includegraphics[scale=0.33]{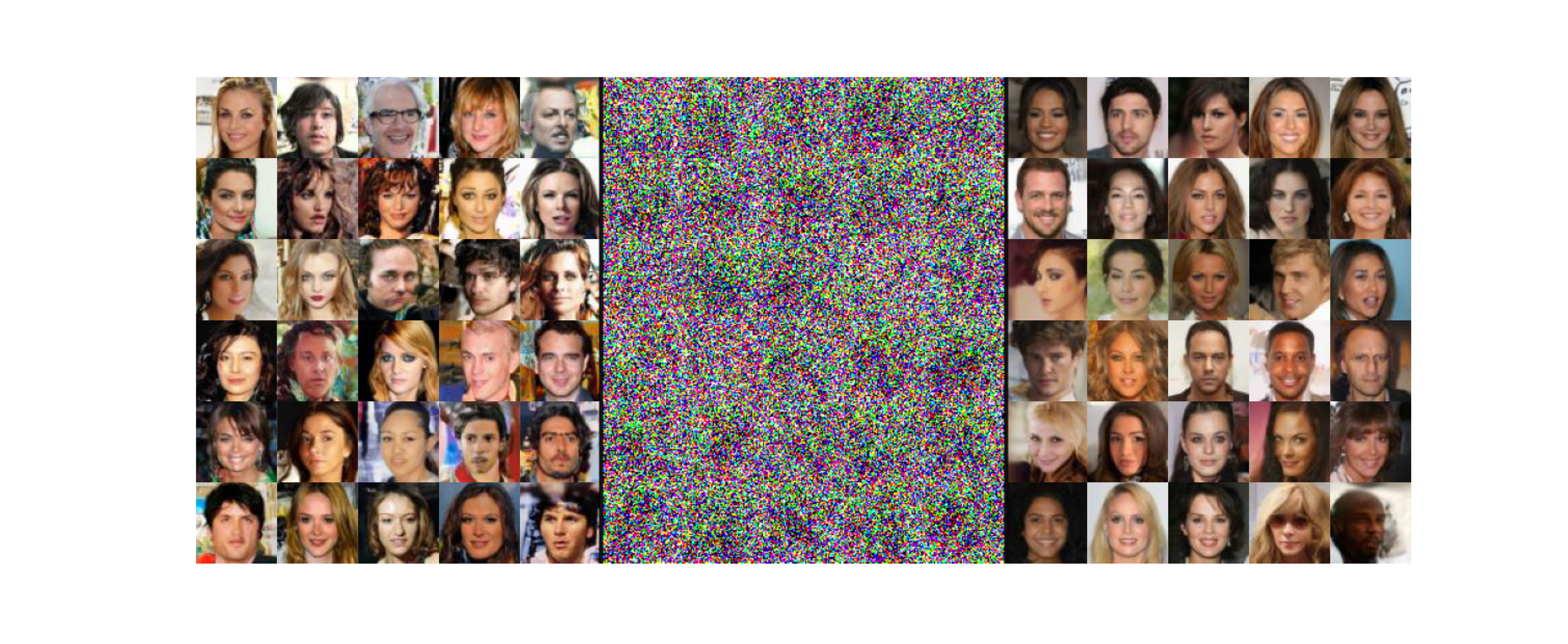}
        
        \vspace{-5ex}
                \hspace{9ex}$\text{FID}=14.5$\hspace{17ex}$\text{FID}=436.9$\hspace{17ex}$\text{FID}=9.1$
    
        \caption{
        Facial images generated on CelebA~\cite{yu2015lsun}.
        \textbf{\em Left}: NCSN++~\cite{song2020score} without TDAS using 333 sampling iterations.
        \textbf{\em Middle}: NCSN++~\cite{song2020score} without TDAS using 66 sampling iterations.
        \textbf{\em Right}: NCSN++ with TDAS using 66 sampling iterations.
        {\em Resolution}: $64 \times 64$.
        }
        \label{fig:celeba}
\end{figure}

\section{Conclusion}
We propose a novel Target Distribution Aware Sampling (TDAS) approach to accelerating existing score-based generative models (SGMs). 
Theoretically, we prove that having higher similarity between the diffusion related distributions and the target distribution encourages sampling convergence in diffusion. 
Further, we formulate a theoretical diffusion process in the frequency domain with an instantiation for image generation with visual structural priors. 
Experimentally, we show that TDAS can significantly accelerate off-the-shelf SGMs while maintaining most generation quality,
especially on high-resolution generation tasks.

\bibliographystyle{plain}
\bibliography{reference}

\appendix

\newpage

\section{Appendix}

\subsection{Discrete cosine transform and discrete Fourier transform}\label{sec:dct}
\textbf{Discrete cosine transform (DCT). }
Let us start with the one-dimensional type II discrete cosine transform (DCT)~\cite{ahmed1974discrete} formulated as
\begin{align*}
    D_1[\mathbf{x}](k)  = \sqrt{\frac{2}{d}}(\frac{1}{\sqrt{2}}\cos \big(\frac{\pi}{d}\frac{k}{2}\big)\mathbf{x}(0)+
    \sum_{n=1}^{d-1}\cos  (\frac{\pi}{d}(n+\frac{1}{2})k)\mathbf{x}(n)),
    k=0,\dots,d-1,
\end{align*}
where $\mathbf{x}(n)$ is the $n$-th coordinate of a $d$-dimensional vector $\mathbf{x}$.

Similarly, for image data with a sample $\mathbf{x} \in \mathbb{R}^{C\times H\times W}$, we define the two-dimensional DCT as 
\begin{align*}
    D_2[\mathbf{x}](c,h,w) = D_{1,\text{col}}[D_{1,\text{row}}[\mathbf{x}](c,h,\cdot)](c,\cdot,w), \;\;
    1\leq c\leq  C, 1\leq h \leq H, 1\leq w \leq W,
\end{align*}
where $D_{1,\text{row}}$ and $D_{1,\text{col}}$ are row-wise and column-wise one-dimensional DCT, respectively. 
The representation matrix of $D_{1,\text{row}}$, $D_{1,\text{col}}$, and their combination $D_2=D_{1,\text{col}}\circ D_{1,\text{row}}$ are all orthogonal. In fact, we have

\begin{theorem}
$D_{1,\text{row}}$, $D_{1,\text{col}}$ and $D_2=D_{1,\text{col}}\circ D_{1,\text{row}}$ are all orthogonal.
\end{theorem}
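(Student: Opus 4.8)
The plan is to reduce the two-dimensional claim to the one-dimensional case and then verify orthogonality of the 1D type-II DCT matrix directly. First I would observe that $D_{1,\text{row}}$ and $D_{1,\text{col}}$ are obtained from the one-dimensional DCT matrix $D_1$ by acting on a single tensor axis while leaving the others untouched; concretely, if $U\in\mathbb{R}^{d\times d}$ is the matrix of $D_1$, then $D_{1,\text{row}}$ is $I_C\otimes I_H\otimes U$ and $D_{1,\text{col}}$ is $I_C\otimes U\otimes I_H$ (up to the obvious index bookkeeping for the $C\times H\times W$ layout). Since a Kronecker product of orthogonal matrices is orthogonal, and a composition (product) of orthogonal matrices is orthogonal, it suffices to prove that $U$ itself is orthogonal; then $D_{1,\text{row}}$, $D_{1,\text{col}}$ and hence $D_2 = D_{1,\text{col}}\circ D_{1,\text{row}}$ are all orthogonal.

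Next I would prove $U^{\mathsf T}U = I_d$ by computing the inner product of two columns (equivalently, rows) of $U$. Writing the $k$-th basis vector as $u_k(n) = \alpha_k \cos\!\big(\tfrac{\pi}{d}(n+\tfrac12)k\big)$ with $\alpha_0 = \sqrt{1/d}$ and $\alpha_k = \sqrt{2/d}$ for $k\ge 1$, the claim is that $\sum_{n=0}^{d-1} u_j(n)\,u_k(n) = \delta_{jk}$. The computation uses the product-to-sum identity $\cos A\cos B = \tfrac12(\cos(A-B)+\cos(A+B))$ to reduce everything to sums of the form $\sum_{n=0}^{d-1}\cos\!\big(\tfrac{\pi}{d}(n+\tfrac12)m\big)$ for integer $m$. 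Such a sum is the real part of a geometric series in $e^{i\pi m/d}$; it evaluates to $d$ when $m\equiv 0$, and telescopes to $0$ for the relevant nonzero values of $m$ (here $m = j\pm k$ with $0\le j,k\le d-1$, so $|m|\le 2d-2$, and one checks the vanishing case by case, the $m=\pm d$ boundary being handled separately since $\cos\big(\tfrac{\pi}{d}(n+\tfrac12)d\big)=\cos\big(\pi n+\tfrac\pi2\big)=0$). The normalization constants $\alpha_k$ are exactly what is needed to turn the resulting value into $\delta_{jk}$.

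The main obstacle is the bookkeeping in the geometric-sum evaluation: one must carefully separate the cases $j=k=0$, $j=k\ge 1$, and $j\ne k$, and within the last case track the half-integer shift $n+\tfrac12$ (which is what makes the type-II DCT orthogonal, unlike the naive cosine basis) and the boundary values $j+k\in\{d,\dots,2d-2\}$. None of this is deep, but it is the only place where a genuine calculation happens; everything else (the $\otimes$ structure, closure of orthogonal matrices under product and Kronecker product) is formal. I would present the 1D orthogonality as the crux and dispatch the 2D statement in one or two lines via the tensor-product remark.
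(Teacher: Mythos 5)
Your proposal is correct and follows essentially the same route as the paper: both reduce to the one-dimensional case by observing that $D_{1,\text{row}}$ and $D_{1,\text{col}}$ act separably (block-diagonally / as Kronecker factors) along a single axis, and then invoke closure of orthogonal matrices under products. The only difference is one of completeness: the paper simply cites the orthogonality of the $1$D type-II DCT as known, whereas you carry out the geometric-sum verification (correctly, assuming the standard orthonormal normalization $\alpha_0=\sqrt{1/d}$, $\alpha_k=\sqrt{2/d}$ on the output index, which is what the paper intends).
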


\begin{proof}
$D_{1,\text{row}}$, $D_{1,\text{col}}$ and $D_2 = D_{1,\text{col}}\circ D_{1,\text{row}}$ can be considered as linear transformation on the $\mathbb{R}^{C\times H\times W}$. Therefore, they have corresponding representation matrices (we also denote them by $D_{1,\text{row}}$, $D_{1,\text{col}}$ and $D_2$) that belong to $\mathbb{R}^{C\times H\times W}\times \mathbb{R}^{C\times H\times W}$.

According to the definition of $D_{1,\text{row}}$, it acts on multiple coordinate groups $\{(c,j,k),1\leq j \leq W\}$ separately, where $1\leq c \leq C$ and $1\leq k \leq H$. Additionally, for each coordinate group, it acts exactly as one-dimensional DCT, which is orthogonal. Therefore, $D_{1,\text{row}}$ is a block orthogonal matrix, and then orthogonal.
Similarly, $D_{1,\text{col}}$ is also a block orthogonal matrix, and then orthogonal. As a result, $D_2$ is orthogonal, as it is the matrix multiplication of two orthogonal matrices.
\end{proof}
\begin{remark}
According to Fubini's theorem, $D_{1,\text{row}}$ and $D_{1,\text{col}}$ are commutable (both of them can be considered as a discrete integrate operator along different axis respectively). Therefore, the order of $D_{1,\text{row}}$ and $D_{1,\text{col}}$ does not matter.
\end{remark}

\textbf{Discrete Fourier transform (DFT). }
The one-dimensional Discrete Fourier transform (DFT) is defined as
\begin{align*}
    D_{F,1}[\mathbf{x}](k) = \sum_{n=0}^{d-1}\mathbf{x}(n)e^{-\frac{i2\pi}{N}kn},k=0,\dots,d-1.
\end{align*}
Similarly, as DCT, we can define the corresponding two-dimensional DFT for image transformation.

\subsection{Automated parameter calculation}\label{sec:freq_analysis}
There are two parameters in $M_{\text{freq}}$ (\ie, $r$ and $\lambda$) in our TDAS (see Section 4.3 of the main paper). 
In this section, we describe how they can be estimated automatically from 
the statistics of the target distribution and the vanilla SGMs,
rather than manually tuned with less optimality.

The key idea is that performance degradation of the vanilla SGMs under acceleration can be quantified as some deviation in the frequency domain. 
This quantification is insightful and useful in how to rectify this deviation via properly setting the parameters of $M_{\text{freq}}$.

\begin{figure}[h]
\vspace{-4ex}
   \begin{subfigure}{0.32\linewidth}
   \includegraphics[scale=0.26]{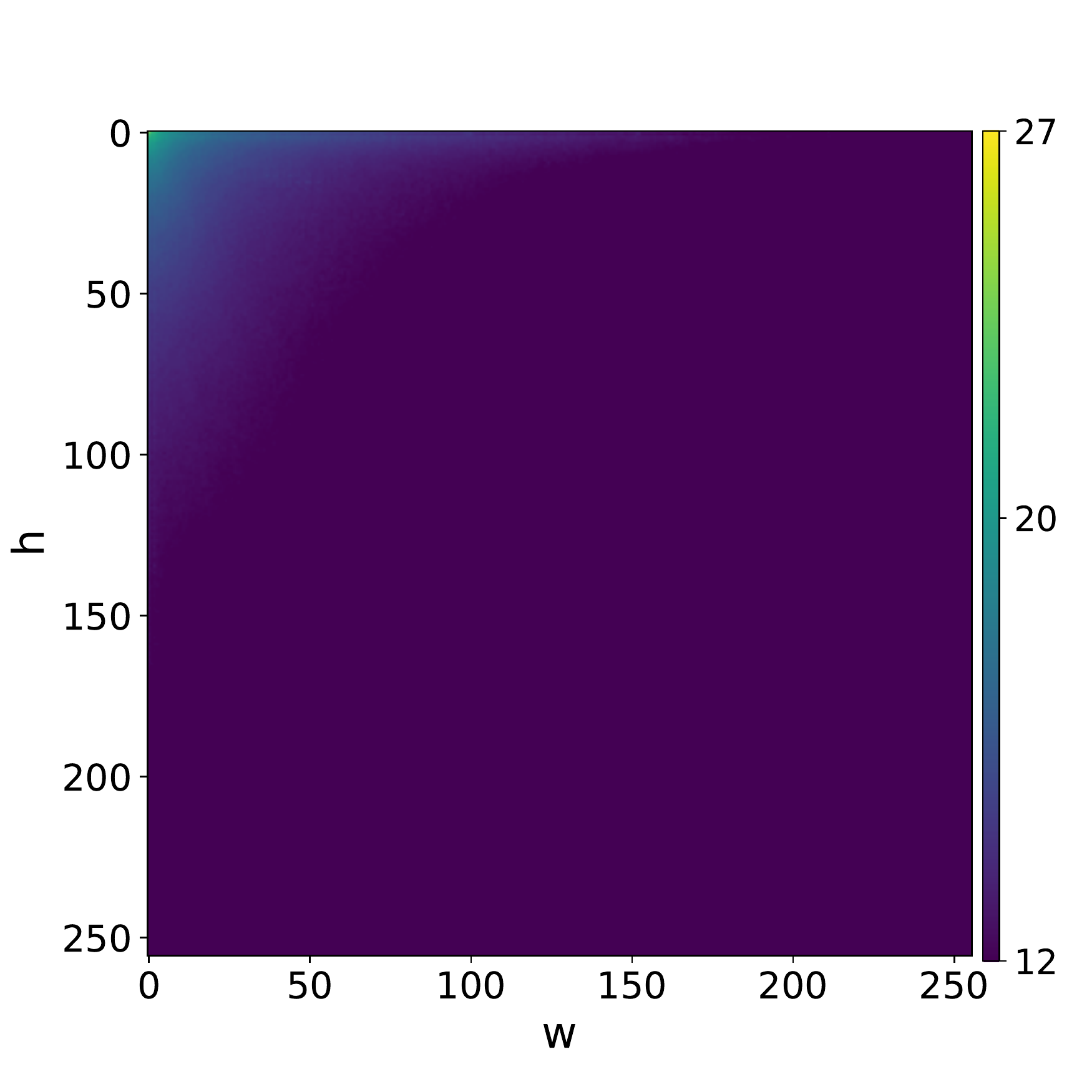}
   \caption{}
   \end{subfigure}
   \begin{subfigure}{0.32\linewidth}
    \includegraphics[scale=0.26]{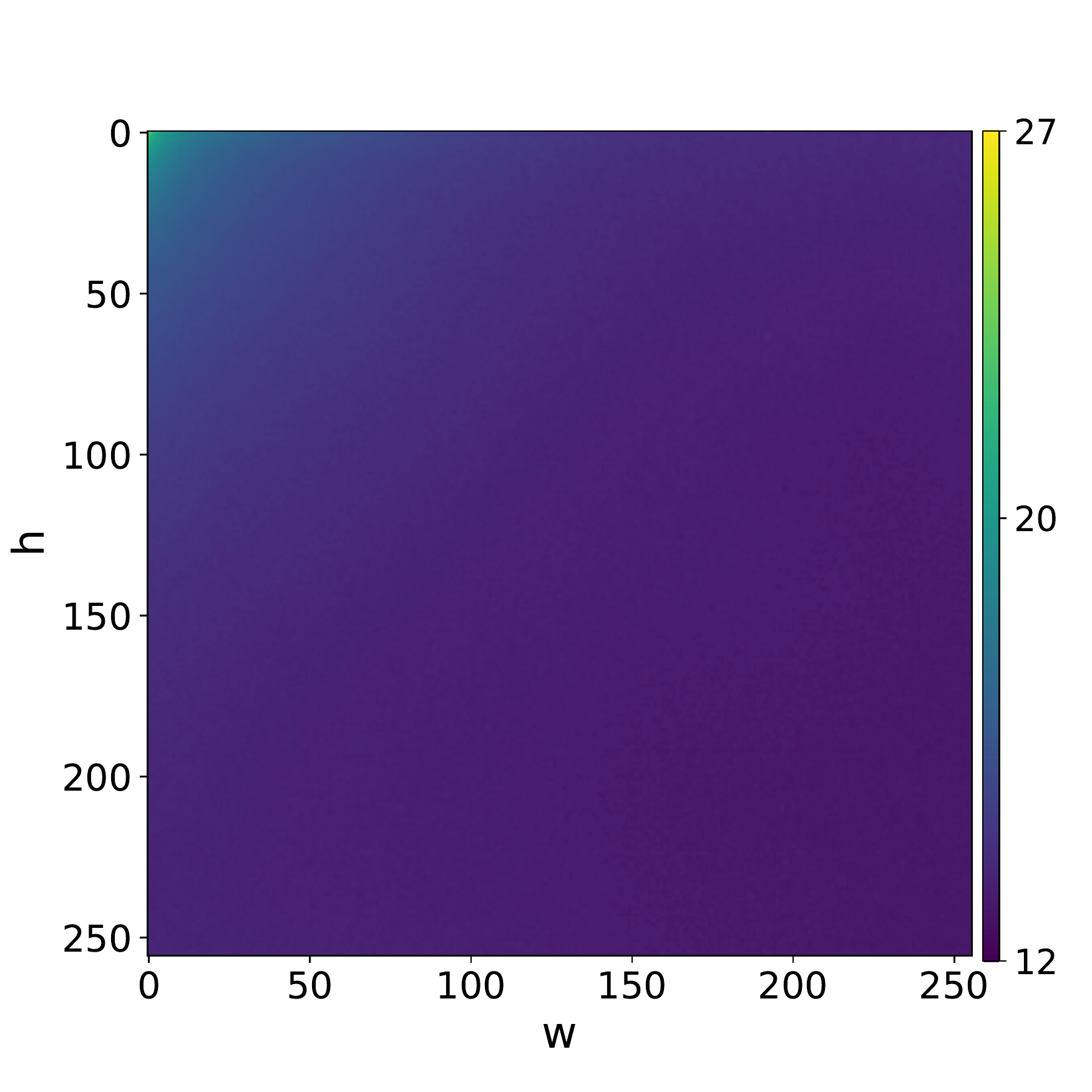}
    \caption{}
   \end{subfigure}
   \begin{subfigure}{0.32\linewidth}
    \includegraphics[scale=0.26]{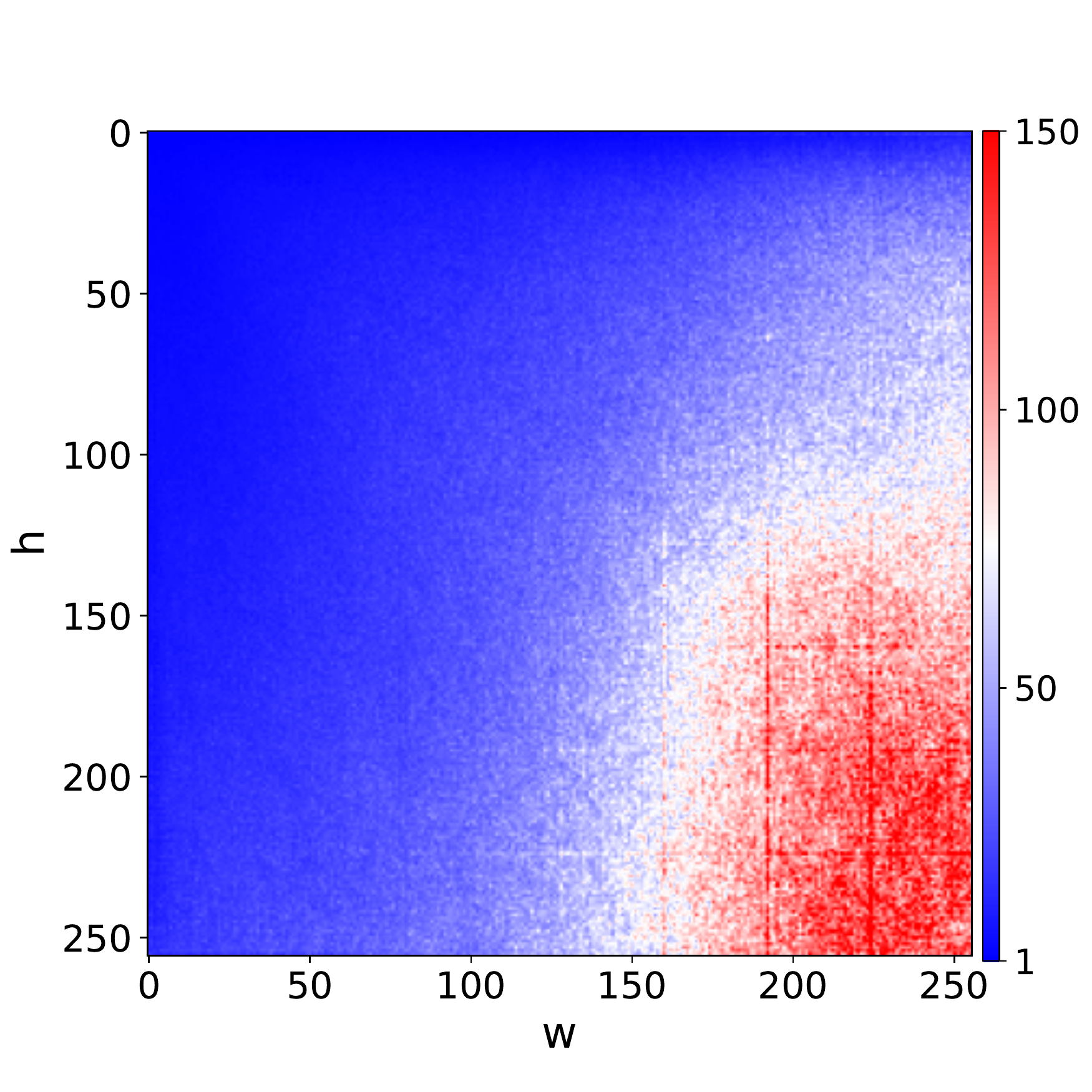}
    \caption{}
   \end{subfigure}
    \caption{The frequency statistics of LSUN (bedroom) images 
    generated by NCSN++~\cite{song2020score} using DCT.  
    (a) The frequency distribution function of the images generated without acceleration ($\phi_{p^\ast}$). 
    (b) The frequency distribution function of the images generated under $5\times$ acceleration ($\phi_{{p}'}$). 
    (c) Their ratio function ($\gamma_{{p}',p^\ast}$).
    $w$ (x-axis) and $h$ (y-axis) denote the width and height coordinate of image, respectively. 
    Image resolution: 256$\times$256 in pixel.
    We have taken the natural logarithm for the frequency distribution for better demonstration.
    }
    \label{fig:bedroom_freq}
\end{figure}

Take LSUN (bedroom)~\cite{yu2015lsun} with $256\times 256$ resolution for a demonstration. Suppose we want to accelerate the NCSN++~\cite{song2020score} up to $5\times$ (400 iterations). Direct acceleration leads to severe deterioration of the outputs as shown in Figure 3 of the main paper. 
For quantization, we estimate the frequency statistics (Figure~\ref{fig:bedroom_freq}(left)) of the images generated by NSCN++ {\em without} acceleration as:
\begin{align*}
    \phi_{p^\ast}(h,w) = \frac{1}{C}\sum_{c=1}^C\mathbb{E}_{\mathbf{x}\sim p^\ast(\mathbf{x})}\left [ D[\mathbf{x}]\odot D[\mathbf{x}] \right](h,w,c),1\leq h \leq H, 1\leq w \leq W,
\end{align*}
where $D[\cdot]$ means discrete cosine transform (DCT).
Similarly, we compute the counterpart (Figure~\ref{fig:bedroom_freq} Middle) of
the images generated by accelerated NCSN++ as:
\begin{align*}
    \phi_{{p}'}(h,w) = \frac{1}{C}\sum_{c=1}^C\mathbb{E}_{\mathbf{x}\sim {p}'(\mathbf{x})}\left [ D[\mathbf{x}]\odot D[\mathbf{x}] \right](h,w,c),1\leq h \leq H, 1\leq w \leq W,
\end{align*}
where ${p}'$ denotes the distribution of the images generated from accelerated NCSN++. 

In general,
we denote $\phi_{p}$ as the frequency distribution function of the distribution $p$.
Then the deviation of the generated images from the dataset can be defined as their ratio as
\begin{align}\label{eq:ratio}
    \gamma_{{p}',p^\ast}(h,w) = \frac{\phi_{{p}'}(h,w)}{\phi_{p^\ast}(h,w)},1\leq h \leq H, 1\leq w \leq W.
\end{align}
As shown in Figure~\ref{fig:bedroom_freq}(right), the high-frequency part of the generated images under acceleration is dramatically higher than that without acceleration.

Denote $S_{{p}',p^\ast} = \{\gamma_{{p}',p^\ast}(h,w),1\leq h \leq H, 1\leq w \leq W\}$ and the $\alpha$ quantile of a finite set $S$ as $Q_{\alpha}(S) = \min_{x\in S}(\{x\in S,\#\{y\in S,y\leq x\}\geq \alpha \#S\})$ where $\#$ means the cardinality of a set.
To rectify this excessive high-frequency amplitude, we set the parameter $M_{\text{freq}}$ as:
\begin{align}\label{eq: freq_mask2}
M_{\text{freq}}(c,h,w) = 
\left\{\begin{matrix}
 1&,& d_0(h,w) \leq 2r_{1}^2,\\
\lambda_1&,&2r_{1}^2 <  d_0(h,w) \leq 2r_{2}^2\\
\lambda_2&,& d_0(h,w) > 2r_{2}^2,\\
\end{matrix}\right.,
\end{align}
where $d_0(h,w) = h^2+w^2$ denotes the square distance from $(h,w)$ to $(0,0)$ (the point corresponding the lowest frequency part)
and
\begin{align*}
    &\lambda_1 = \frac{\text{ave}(S_{{p}',p^\ast})}{Q_{0.75}(S_{{p}',p^\ast})},\\
    &\lambda_2 =\frac{\text{ave}(S_{{p}',p^\ast})}{Q_{0.9}(S_{{p}',p^\ast})},
\end{align*}
\begin{align*}
    &r_1 =\arg\min_{r}\{r>0,\text{ave}(\{\gamma_{{p}',p^\ast}(h,w),d_0(h,w)\geq 2r^2,1\leq h \leq H, 1\leq w \leq W\})=Q_{0.75}(S_{{p}',p^\ast})\},\\
    &r_2 =\arg\min_{r}\{r>0,\text{ave}(\{\gamma_{{p}',p^\ast}(h,w),d_0(h,w)\geq 2r^2,1\leq h \leq H, 1\leq w \leq W\})=Q_{0.9}(S_{{p}',p^\ast})\},
\end{align*}
where $\text{ave}(\cdot)$ means the average of an finite set.
Our intuition is explained as follows. 
We consider $\frac{Q_{0.75}(S_{{p}',p^\ast})}{\text{ave}(S_{{p}',p^\ast})}$ and $\frac{Q_{0.9}(S_{{p}',p^\ast})}{\text{ave}(S_{{p}',p^\ast})}$ as the middle and high level deviation ratio of the generated images distribution under acceleration, respectively. 
To counter their effect, we should scale them by a degree of the {\em inverse}.
Hence $\lambda_1$ and $\lambda_2$ are set to the inverse of the respective deviation ratio.
On the other hand, $r_1$ and $r_2$ specify the radius of the sectors outside of which the rectification will be imposed on (Figure~\ref{fig:freq_mask2}).
\begin{figure}[h]
    \centering
    \includegraphics[scale=0.5]{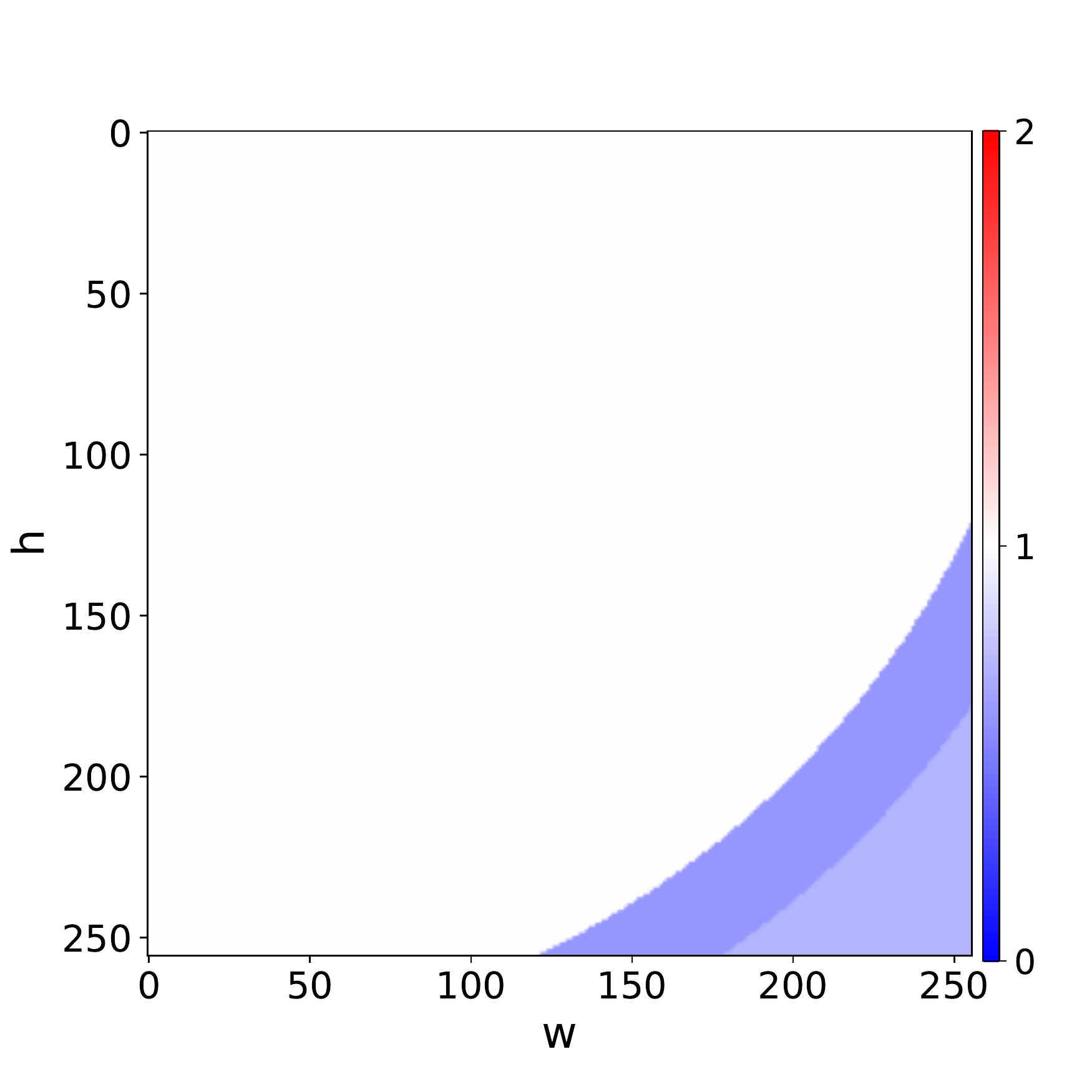}
    \vspace{-3ex}
    \caption{
    Visualization of TDAS's parameter $M_{\text{freq}}$ for accelerating NCSN++~\cite{song2020score} to generate LSUN (bedroom) images at a resolution of 256$\times$256.
    $w$ (x-axis) and $h$ (y-axis) denote the width and height coordinate of image, respectively.
    Here, the parameters estimated using DCT are: $\lambda_1=0.59,\lambda_2=0.71,r_1=0.78H$ and $r_2 = 0.86H$.}
    \label{fig:freq_mask2}
\end{figure}

\textbf{Calculating filters using DFT. }
Instead of DCT, we can also apply the DFT to construct the frequency filer (compared to Eq. (5) in the main paper) as:
\begin{align*}
        \bm{\eta}_t = \text{Real}\big[D_F^{-1}\big[M_{\text{freq}}\odot D_F[M_{\text{space}}\odot \mathbf{z}_t]\big]\big],
\end{align*}
where $D_F[\cdot]$ is the DFT, and $\text{Real}[\cdot]$ means taking the real part. The corresponding frequency statistics (Figure~\ref{fig:bedroom_freq_fct}) can be defined as
\begin{align*}
    &\phi_{p^\ast}(h,w) = \frac{1}{C}\sum_{c=1}^C\mathbb{E}_{\mathbf{x}\sim p^\ast(\mathbf{x})}\left [ D_F[\mathbf{x}]\odot \overline{D_F[\mathbf{x}]} \right](h,w,c),1\leq h \leq H, 1\leq w \leq W,\\
    &\phi_{{p}'}(h,w) = \frac{1}{C}\sum_{c=1}^C\mathbb{E}_{\mathbf{x}\sim {p}'(\mathbf{x})}\left [D_F[\mathbf{x}]\odot \overline{D_F[\mathbf{x}]} \right](h,w,c),1\leq h \leq H, 1\leq w \leq W,\\
    &\gamma_{{p}',p^\ast}(h,w) = \frac{\phi_{{p}'}(h,w)}{\phi_{p^\ast}(h,w)},1\leq h \leq H, 1\leq w \leq W,
\end{align*}
where $\overline{\cdot}$ means taking element-wise complex conjugation.

\begin{figure}[h]
\vspace{-4ex}
   \begin{subfigure}{0.32\linewidth}
   \includegraphics[scale=0.26]{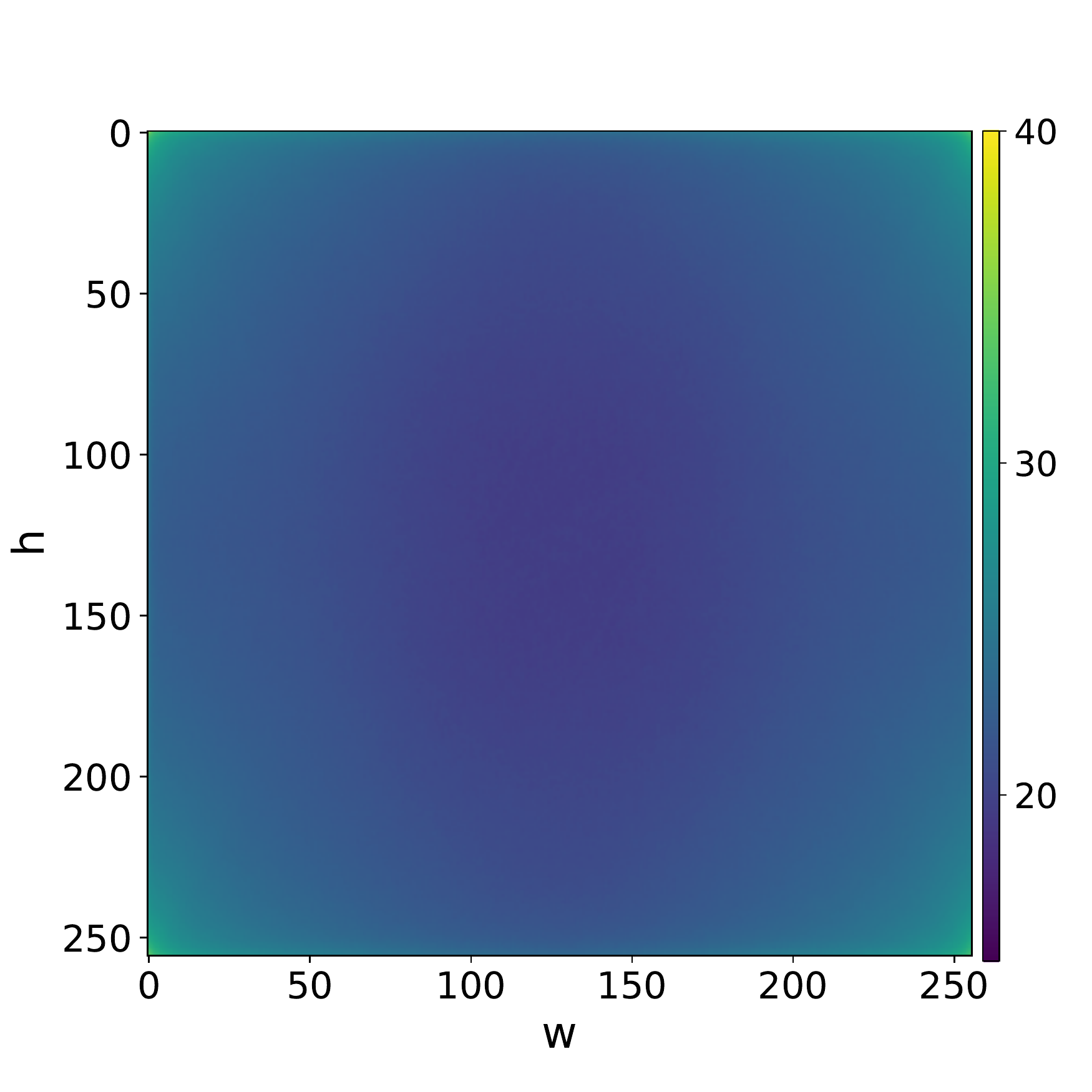}
   \caption{}
   \end{subfigure}
   \begin{subfigure}{0.32\linewidth}
    \includegraphics[scale=0.26]{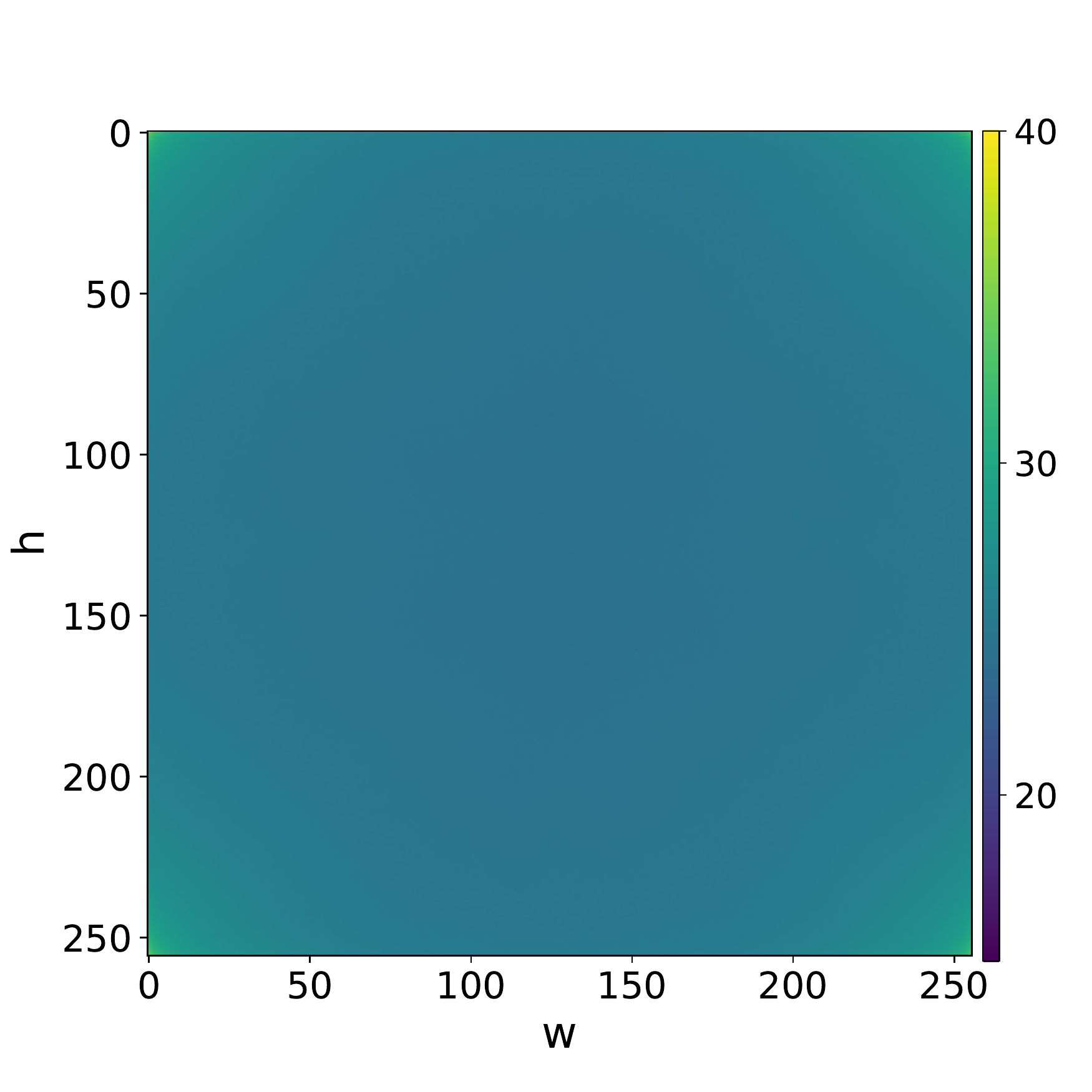}
    \caption{}
   \end{subfigure}
   \begin{subfigure}{0.32\linewidth}
    \includegraphics[scale=0.26]{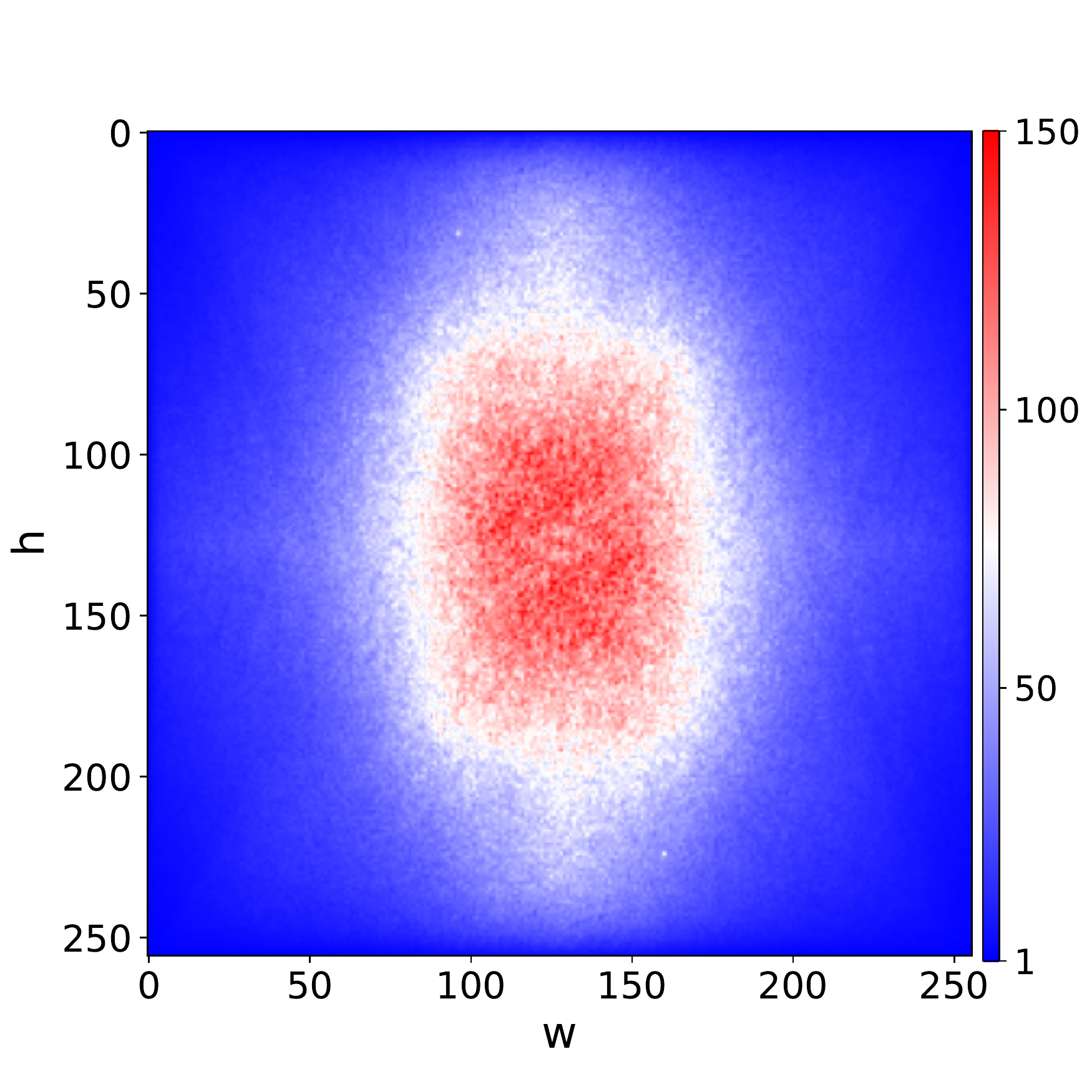}
    \caption{}
   \end{subfigure}
   \caption{The frequency statistics of LSUN (bedroom) images 
    generated by NCSN++~\cite{song2020score} using DFT.  
    (a) The frequency distribution function of the images generated without acceleration ($\phi_{p^\ast}$). 
    (b) The frequency distribution function of the images generated under $5\times$ acceleration ($\phi_{{p}'}$). 
    (c) Their ratio function ($\gamma_{{p}',p^\ast}$).
    $w$ (x-axis) and $h$ (y-axis) denote the width and height coordinate of image, respectively. 
    Image resolution: 256$\times$256 in pixel.
    We have taken the natural logarithm for the frequency distribution for better demonstration.
    }
    \label{fig:bedroom_freq_fct}
\end{figure}

\begin{figure}[ht]
    \vspace{-5ex}
    \centering
    \includegraphics[scale=0.5]{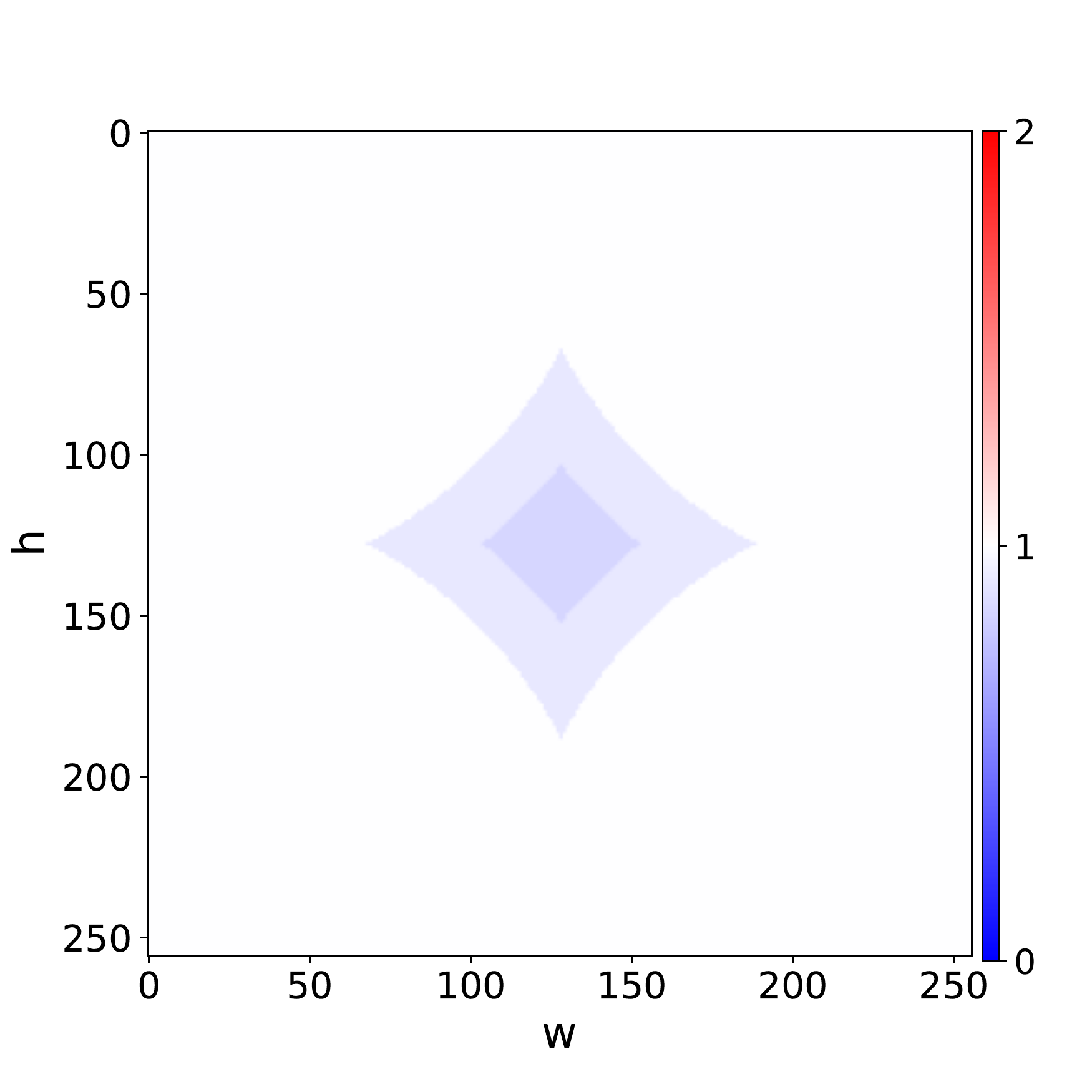}
    \vspace{-3ex}
    \caption{
    Visualization of TDAS's parameter $M_{\text{freq}}$ for accelerating NCSN++~\cite{song2020score} to generate LSUN (bedroom) images at a resolution of 256$\times$256.
    $w$ (x-axis) and $h$ (y-axis) denote the width and height coordinate of image, respectively.
    Here, the parameters estimated using DFT are: $\lambda_1=0.907,\lambda_2=0.842,r_1=0.0.04H$ and $r_2 = 0.0455H$.}
    \label{fig:freq_mask}
\end{figure}

We similarly estimate the corresponding parameters ($\lambda_1,\lambda_2,r_1,r_2$) of $M_{\text{freq}}$,
given a target task, an existing SGM, and a desired iteration number. The only difference is that in this case the square distance to $(0,0)$ (\ie, $d_0(h,w)$ in Eq.~\eqref{eq: freq_mask2}) is replaced by 
\begin{align*}
  \hat{d}_0(h,w) = \min\{ h^2+w^2,(H-h)^2+w^2,h^2+(W-w)^2,(H-h)^2+(W-w)^2 \}, 
\end{align*}
due to the symmetry property of the Fourier spectrum.

We summarize the resulting parameters for NCSN++ in Table~\ref{tab:para}. We have the following findings: (1) As the iterations decrease, both $\lambda_1$ and $\lambda_2$ decrease, suggesting that larger step size leads to more high-frequency noises generated by NCSN++. (2) For the cases of DCT, both $r_1$ and $r_2$ remain the same across different tasks and iterations, implying that there is some invariant properties for NCSN++ in generating images, such as NCSN++ could be easily confused on certain area in the frequency domain.
Note, in the main paper, by default we estimate the parameters using DFT, except LSUN datasets where we find empirically that DCT is a better choice.
The reason behind this phenomenon needs a further study.

\begin{table}[ht]
\caption{The calculated parameters of $M_{\text{freq}}$ for NCSN++~\cite{song2020score} on different tasks. }
\label{tab:para}
\centering
\begin{tabular}{cccccccc}
\toprule[2pt]
\textbf{Dataset}                   &\textbf{Resolution}            & \textbf{Iterations} &\textbf{Method}& $\lambda_1$ & $\lambda_2$ & $r_1$ & $r_2$  \\ \midrule
\multirow{7}{*}{CIFAR-10} & \multirow{7}{*}{32x32}& 200        &DFT        & 0.984     & 0.967     & 0.400  & 0.455 \\
                           &                      & 166        &DFT        & 0.973     & 0.945     & 0.400  & 0.455 \\
                           &                      & 142        &DFT        & 0.965     & 0.932     & 0.400  & 0.455 \\
                           &                      & 125        &DFT        & 0.959     & 0.929     & 0.400  & 0.455 \\
                           &                      & 111        &DFT        & 0.952     & 0.901     & 0.400  & 0.455 \\
                           &                      & 100        &DFT        & 0.948     & 0.895     & 0.400  & 0.455 \\
                           &                      & 50         &DFT        & 0.892     & 0.798     & 0.400  & 0.455 \\\midrule
CelebA                           &64x64          & 132        &DFT        & 0.877     & 0.782     & 0.398  & 0.456 \\\midrule
LSUN (church)                    &256x256        & 400        &DFT        & 0.921     & 0.873     & 0.400  & 0.455 \\\midrule
LSUN (church)                    &256x256        & 400        &DCT        & 0.591     & 0.511     & 0.781  & 0.862 \\\midrule
LSUN (bedroom)                   &256x256        & 400        &DFT        & 0.907     & 0.842     & 0.400  & 0.455 \\\midrule
LSUN (bedroom)                   &256x256        & 400        &DCT        & 0.638     & 0.540     & 0.770  & 0.901 \\\midrule
FFHQ                             &1024x1024      & 100        &DFT        & 0.955     & 0.878     & 0.400  & 0.455 \\\bottomrule[2pt]

\end{tabular}
\end{table}

\textbf{Computational cost. }
The frequency statistics for parameter estimation
is efficient computationally.
Only 200 samples per task need to be generated
by the SGMs under each acceleration rate.

\textbf{Parameter sensitivity. }
We investigate the parameter sensitivity of the frequency filter (Eq.~\eqref{eq: freq_mask2}). We use NCSN++ with the sampling iterations $T=400$ on LSUN (bedroom) calculating parameters by DCT (DFT version has the similar results). 

We first vary $\lambda_1$ and $\lambda_2$, whilst fixing $r_{1}$ and $r_{,2}$. It is observed in Figure~\ref{fig: para1} that there is a large range for both parameters that can produce samples of high quality. If $\lambda_1$ and $\lambda_2$ are too high, TDAS degrades to the vanilla sampling method, and the images generated corrupt;
On the contrary, if they are too low, which means that we shrink the high-frequency part too much, detailed structures will fade away, resulting in pale images.

We then vary $r_{1}$ and $r_{2}$, whilst fixing $\lambda_1$ and $\lambda_2$.
As shown in Figure~\ref{fig: para2},
there is still a large range for both parameters that can produce images of high quality.
It is observed that as the two threshold radiuses decrease, the samples gradually lose detailed structures. On the one hand, if $r_{1}$ and $r_{2}$ are too high, TDAS degrades to the vanilla sampling method, and the samples are disturbed by high-frequency noise dramatically. 
In contrast, if $r_{1}$ and $r_{2}$ are too low, the low-frequency appearance of an image generated will then disappear, resulting in tedious images without textures and objects.

\textbf{Space filter. }
We use the space filter $M_{\text{space}}$ for CIFAR-10, CelebA~\cite{liu2015faceattributes} and FFHQ~\cite{karras2019style}, since they have an obvious spatial distribution characteristics. $M_{\text{space}}$ is calculated from the mean of the datasets as shown in Section 4.3 of the main paper.
Empirically, we find out that it is better to normalize the space filter to guarantee the stability. Concretely, we apply the following normalized space filter
\begin{align*}
  \hat{M}_{\text{space}}(c,h,w) \leftarrow  \frac{1}{3}\Big(2\frac{M_{\text{space}}(c,h,w)}{\max_{c,h,w}M_{\text{space}}(c,h,w)}+1 \Big).
\end{align*}

\subsection{Accelerating the DDPMs}\label{sec:ddpm}

\begin{figure}[h]
\begin{subfigure}{0.48\linewidth}
\includegraphics[scale=0.285]{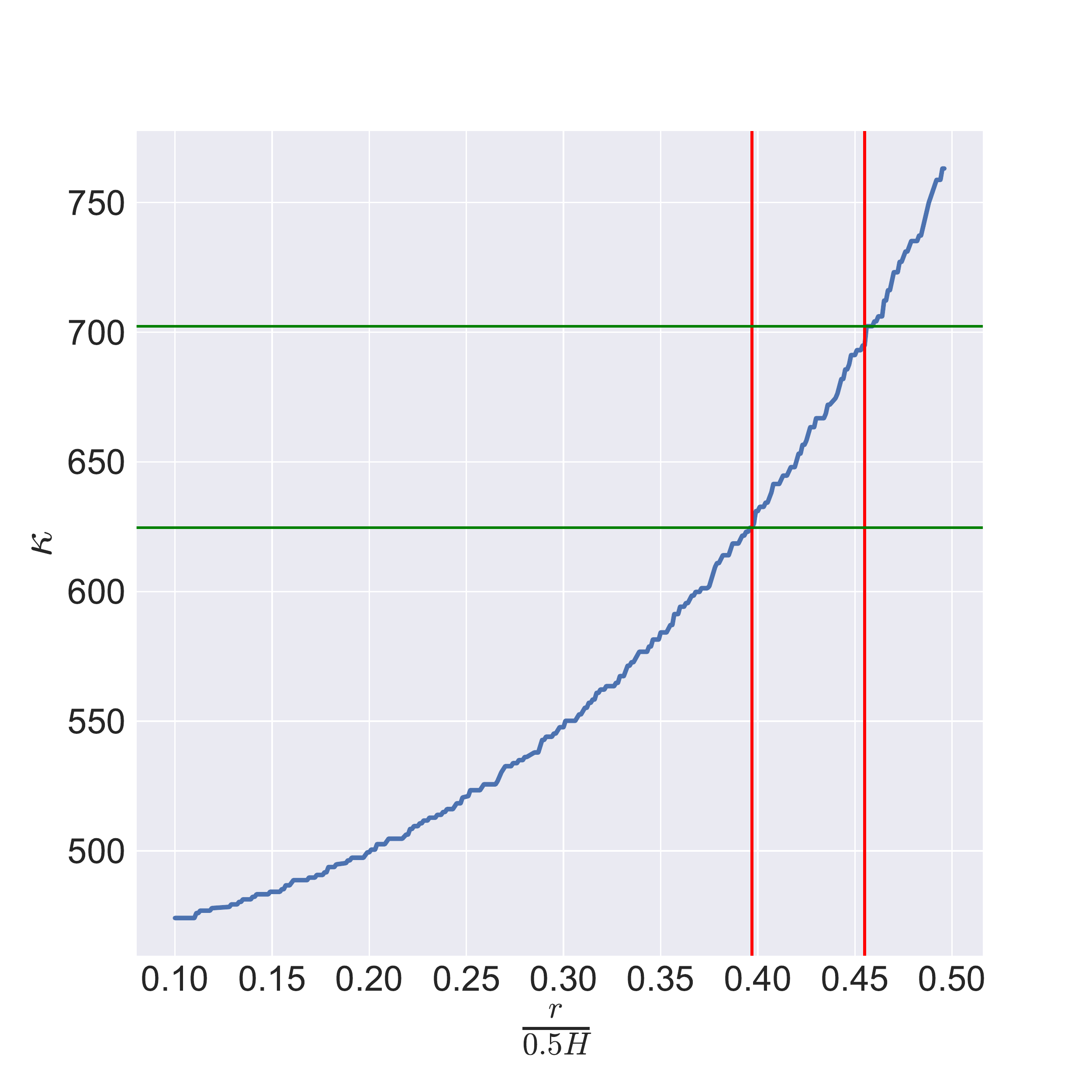}
\caption{}
\end{subfigure}
\hspace{1em}
\begin{subfigure}{0.48\linewidth}
\includegraphics[scale=0.285]{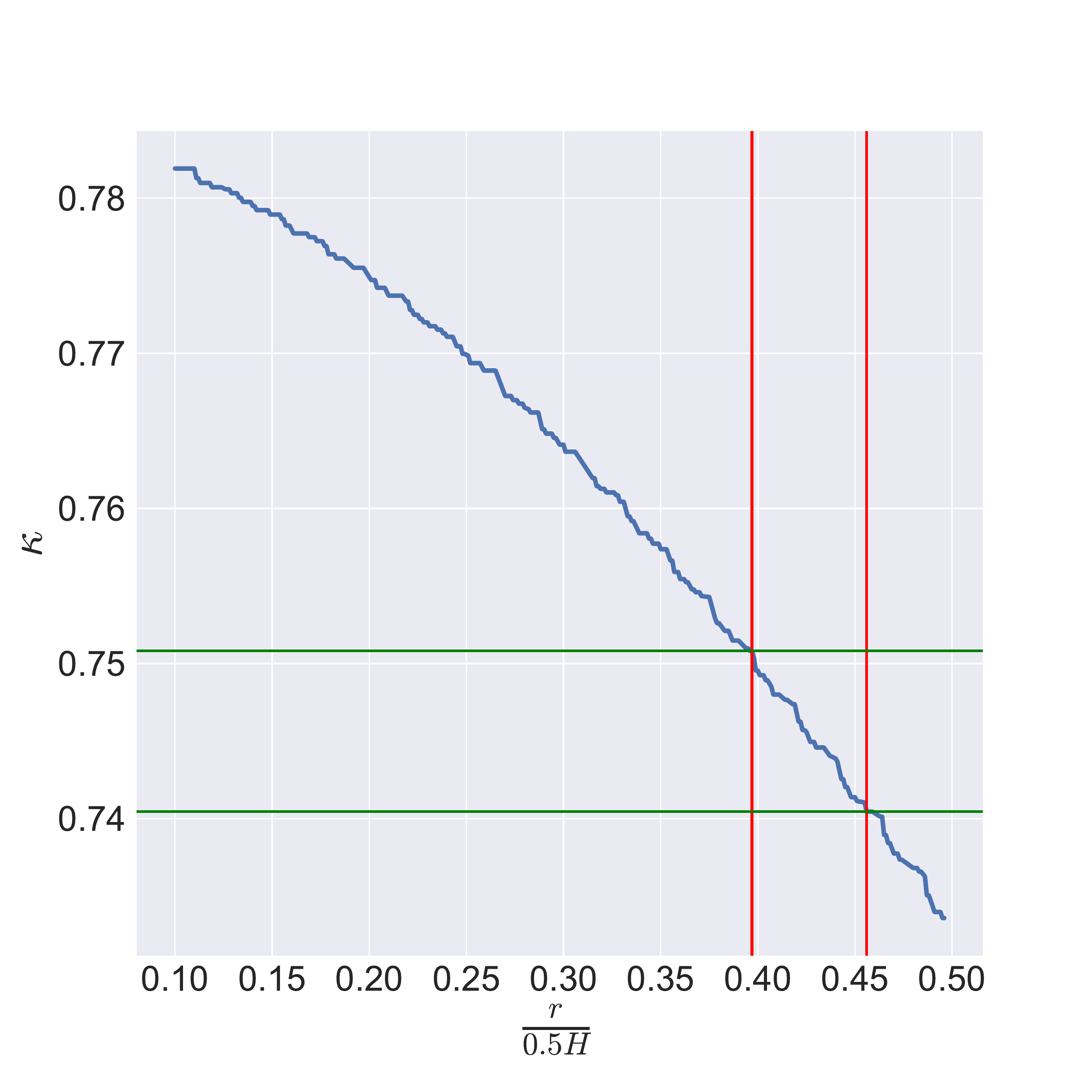}
\caption{}
\end{subfigure}
\caption{The frequency behavior functions of NCSN++~\cite{song2020score} (a) and Analytic-DDIM~\cite{bao2022analytic} (b).
The green lines label the levels of $Q_{0.75}(S_{{p}',p^\ast}),Q_{0.9}(S_{{p}',p^\ast})$ for NCSN++ and $Q_{0.25}(S_{{p}',p^\ast}),Q_{0.1}(S_{{p}',p^\ast})$ for Analytic-DDIM.
The red lines point to the values of $r$ where $\kappa(r)$ can reach the corresponding level. 
Dataset: CelebA ($64\times64$).}\label{fig:kappa}
\end{figure}

In this section, 
we generalize the application of our TDAS to
the Denosing Diffusion Probabilistic Models (DDPMs)~\cite{DBLP:conf/nips/HoJA20,song2020denoising,DBLP:conf/icml/NicholD21,bao2022analytic}, a class of generative models that can be considered as a variant of SGMs~\cite{song2020score}.

With the frequency statistics in Section~\ref{sec:freq_analysis}, we have defined the ratio function $ \gamma_{{p}',p^\ast}$ (Eq.~\eqref{eq:ratio}). To further quantify the behavior of a SGM across different frequencies, we define the following {\em frequency behavior function} as:
\begin{align*}
    \kappa(r) = \text{ave}(\{\gamma_{{p}',p^\ast}(h,w),d_0(h,w)\geq 2r^2,1\leq h \leq H, 1\leq w \leq W\}),0\leq r \leq \sqrt{W^2+H^2}.
\end{align*}
This aims to describe how a SGM amplifies or shrinks the corresponding area across the frequency dimension on a target task.
It is observed in Figure ~\ref{fig:kappa} (left) that the behavior function of NCSN++ on CelebA ($64\times64$) is monotonically increasing and $\kappa>1$, suggesting that NCSN++ tends to amplify the high-frequency part of the output images, leading to high-frequency noises.

This also applies to DDPMs.
Taking a recent DDPM model called Analytic-DDIM~\cite{bao2022analytic} as an example, we calculate the corresponding frequency behavior function as shown in Figure~\ref{fig:kappa}(right).
On the contrary, this $\kappa$ is monotonically decreasing and $\kappa<1$, suggesting that Analytic-DDIM tends to shrink the high-frequency part of the output images. 
This finding implies that NCSN++ and Analytic-DDIM may exhibit opposite behavior intrinsically.
According to \cite{song2020score}, DDPMs and SGMs correspond to variance preserving (VP) and variance exploding (VE) stochastic different equation (SDE), respectively. Our above findings indicate that VE and VP SDEs might generally have opposite behaviors in the frequency domain in image generation. 
This is inspiring for developing more proper SDEs to 
better implement the diffusion process.

As shown in Figure~\ref{fig:kappa}, another interesting observation is that near $r=0.4$, NCSN++ reaches its $Q_{0.75}(S_{{p}',p^\ast})$ and Analytic-DDIM reaches its $Q_{0.25}(S_{{p}',p^\ast})$; Meanwhile, near $r=0.455$, NCSN++ reaches its $Q_{0.9}(S_{{p}',p^\ast})$, and Analytic-DDIM reaches its $Q_{0.1}(S_{{p}',p^\ast})$.
This suggests that the deviation patterns caused by NCSN++ and Analytic-DDIM share structural similarity, despite in the opposite direction.
{As what we do for SGMs,} to accelerate Analytic-DDIM on CelebA ($64\times64$), we can set the corresponding parameters of $M_{\text{freq}}$ as follows
\begin{align*}
    &\lambda_1 = \frac{\text{ave}(S_{{p}',p^\ast})}{Q_{0.25}(S_{{p}',p^\ast})},\\
    &\lambda_2 =\frac{\text{ave}(S_{{p}',p^\ast})}{Q_{0.1}(S_{{p}',p^\ast})},
\end{align*}
\begin{align*}
    &r_1 =\arg\min_{r}\{r>0,\kappa(r)=Q_{0.25}(S_{{p}',p^\ast})\},\\
    &r_2 =\arg\min_{r}\{r>0,\kappa(r)=Q_{0.1}(S_{{p}',p^\ast})\}.
\end{align*}
Note we apply $Q_{0.1}$ and $Q_{0.25}$ instead of $Q_{0.9}$ and $Q_{0.75}$ as used for NCSN++. This is due to the monotonical decreasing property of $\kappa(r)$ with Analytic-DDIM, which is in opposite to NCSN++.

It is shown in 
Table~\ref{tab:ddim_para} across a variety of acceleration rates (\ie, the sampling iterations), TDAS can consistently further improve the performance of Analytic-DDIM.
In this test, we find the space filter $M_{\text{space}}$ is not helpful and we will further investigate the issue.

\begin{table}[H]
\begin{center}
\caption{The estimated parameters of $M_{\text{freq}}$ by DFT for NCSN++~\cite{song2020score} on CelebA ($64\times64$). 
}\label{tab:ddim_para}
  \begin{tabular}{ccccc}
\toprule[2pt]
\textbf{Iterations} & $\lambda_1$ & $\lambda_2$ & $r_1$    & $r_2$    \\ \midrule
10        & 2.94    & 3.04    & 0.399 & 0.455 \\
25        & 1.70    & 1.74    & 0.399 & 0.455 \\
50        & 1.33    & 1.35    & 0.399 & 0.455 \\
100       & 1.14    & 1.16    & 0.399 & 0.455 \\
200       & 1.05    & 1.06    & 0.399 & 0.455 \\ \bottomrule[2pt]
\end{tabular}  
\end{center}

\end{table}

\begin{table}[H]
\begin{center}
\caption{
Effect of TDAS on improving Analytic-DDIM on CelebA ($64\times64$).
Metric: FID score.
}
\label{tab:ddim}
    \begin{tabular}{ccc}
\toprule[2pt]
\textbf{Iterations} & w/o TDAS & w/ TDAS \\ \midrule
10                 & 15.53             & \textbf{15.31}   \\
25                 & 9.42              & \textbf{9.15}    \\
50                 & 6.17              & \textbf{6.08}    \\
100                & 4.31              & \textbf{4.29}    \\
200                & 3.54              & \textbf{3.49}    \\ \bottomrule[2pt]
\end{tabular}
\end{center}

\end{table}

\subsection{More samples}\label{sec:sample}

We provide more samples generated by NCSN++ with TDAS for LSUN~\cite{yu2015lsun} (bedroom and church) and FFHQ~\cite{karras2019style}.
The results are presented in Figure~\ref{fig:bedroom_demo} and Figure~\ref{fig:church_demo}, Figure~\ref{fig:demo_sm} and Figure~\ref{fig:demo_sm2}.

\begin{figure}[ht]
\vspace{-14ex}
	\begin{center}
    \centerline{\includegraphics[scale=0.42]{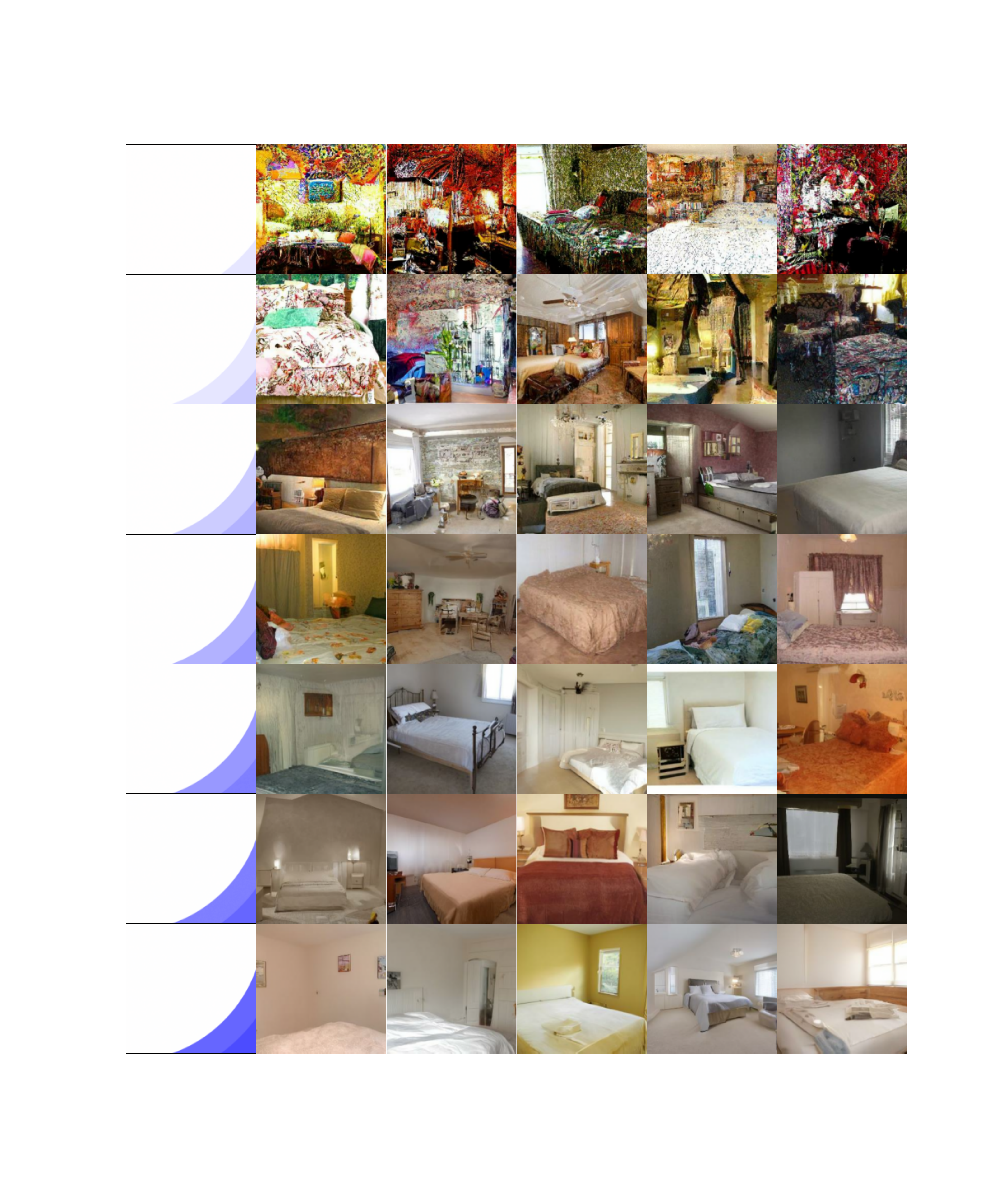}}
    \vspace{-15ex}
	\caption{Samples produces by TDAS with NCSN++~\cite{song2020score} on LSUN (bedroom)~\cite{yu2015lsun}. The first column represents the frequency filter using the same color map as Fig.~\ref{fig:freq_mask}. We fix $(r_{1},r_{2}) = (0.7,0.875)$ and change $(\lambda_1,\lambda_2)$  from top to bottom as follows: $(1,0.9),(0.9,0.8),(0.8,0.7),(0.7,0.6),(0.6,0.5),(0.5,0.4),(0.4,0.3)$.}\label{fig: para1}
	\end{center}
\end{figure}
\begin{figure}[ht]
\vspace{-14ex}
	\begin{center}
    \centerline{\includegraphics[scale=0.42]{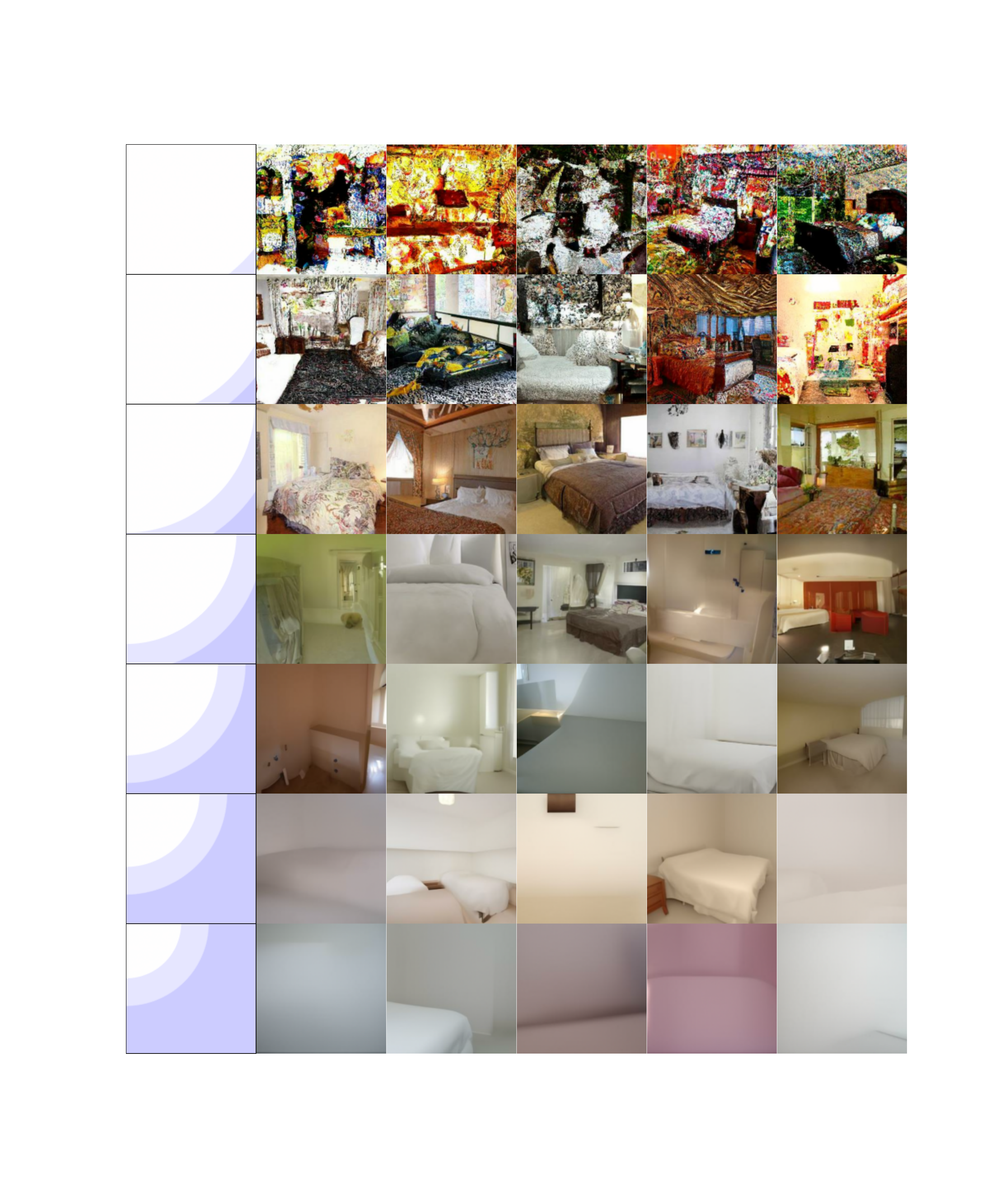}}
     \vspace{-15ex}
    		\caption{Image samples produced by TDAS with NCSN++~\cite{song2020score} on LSUN (bedroom)~\cite{yu2015lsun}. The first column represents the frequency filter using the same color map as Fig.~\ref{fig:freq_mask}. We fix $(\lambda_1,\lambda_2) = (0.9,0.8)$ and change $(r_{1},r_{2})$  from top to bottom as follows: $(0.9,1),(0.8,0.7),(0.7,0.6),(0.6,0.5),(0.5,0.4),(0.4,0.3),(0.3,0.2)$.}\label{fig: para2}
	\end{center}
\end{figure}

\begin{figure}[ht]
    \vspace{-30ex}
    \includegraphics[scale=0.28]{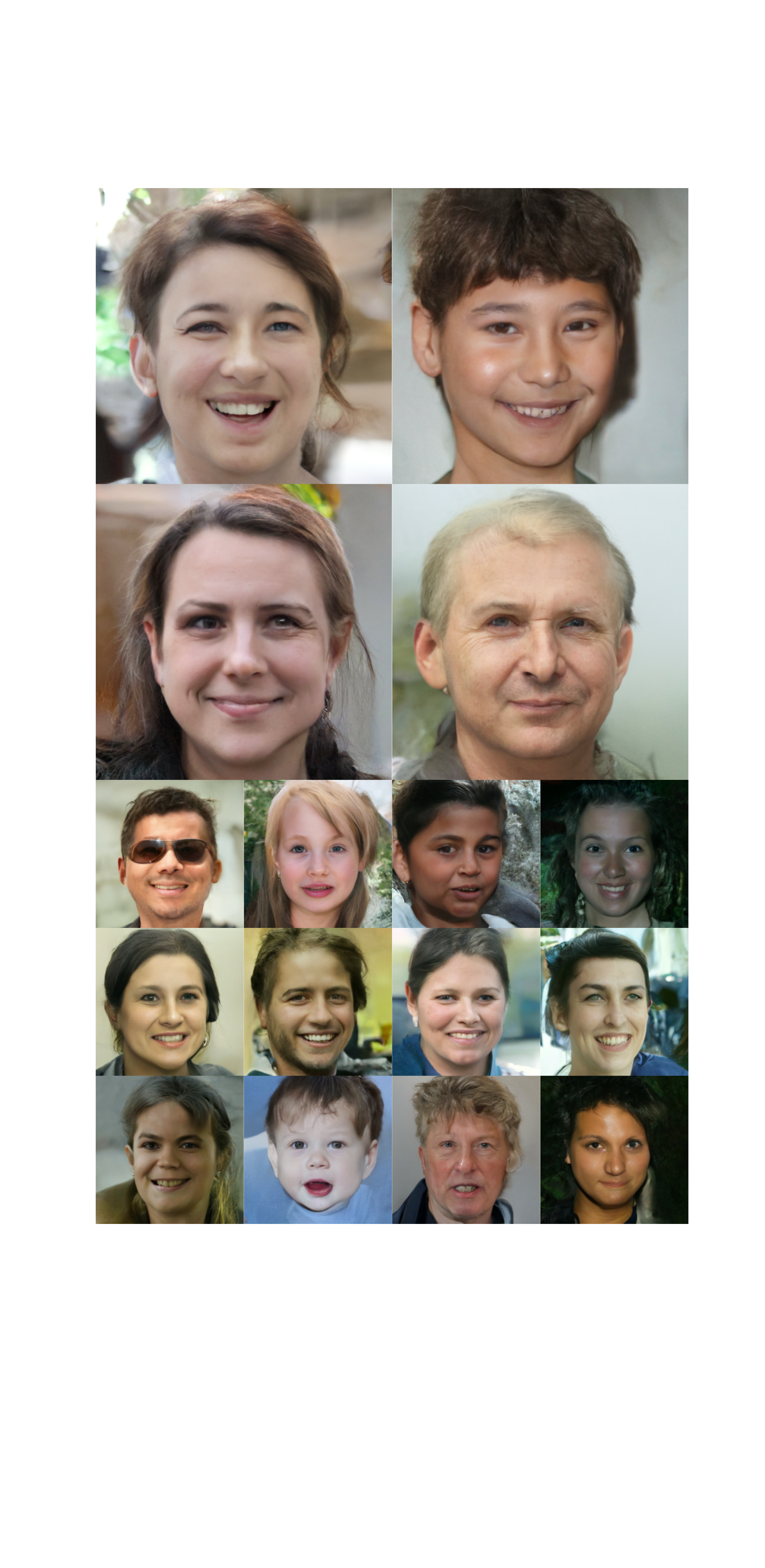}
    \vspace{-41ex}
    \caption{FFHQ ($1024\times1024$)~\cite{karras2019style} samples generated by NCSN++~\cite{song2020score} with TDAS under 100 iterations.}
    \label{fig:demo_sm}
\end{figure}
\begin{figure}[ht]
    \vspace{-50ex}
    \hspace{-16ex}
    \includegraphics[scale=0.25]{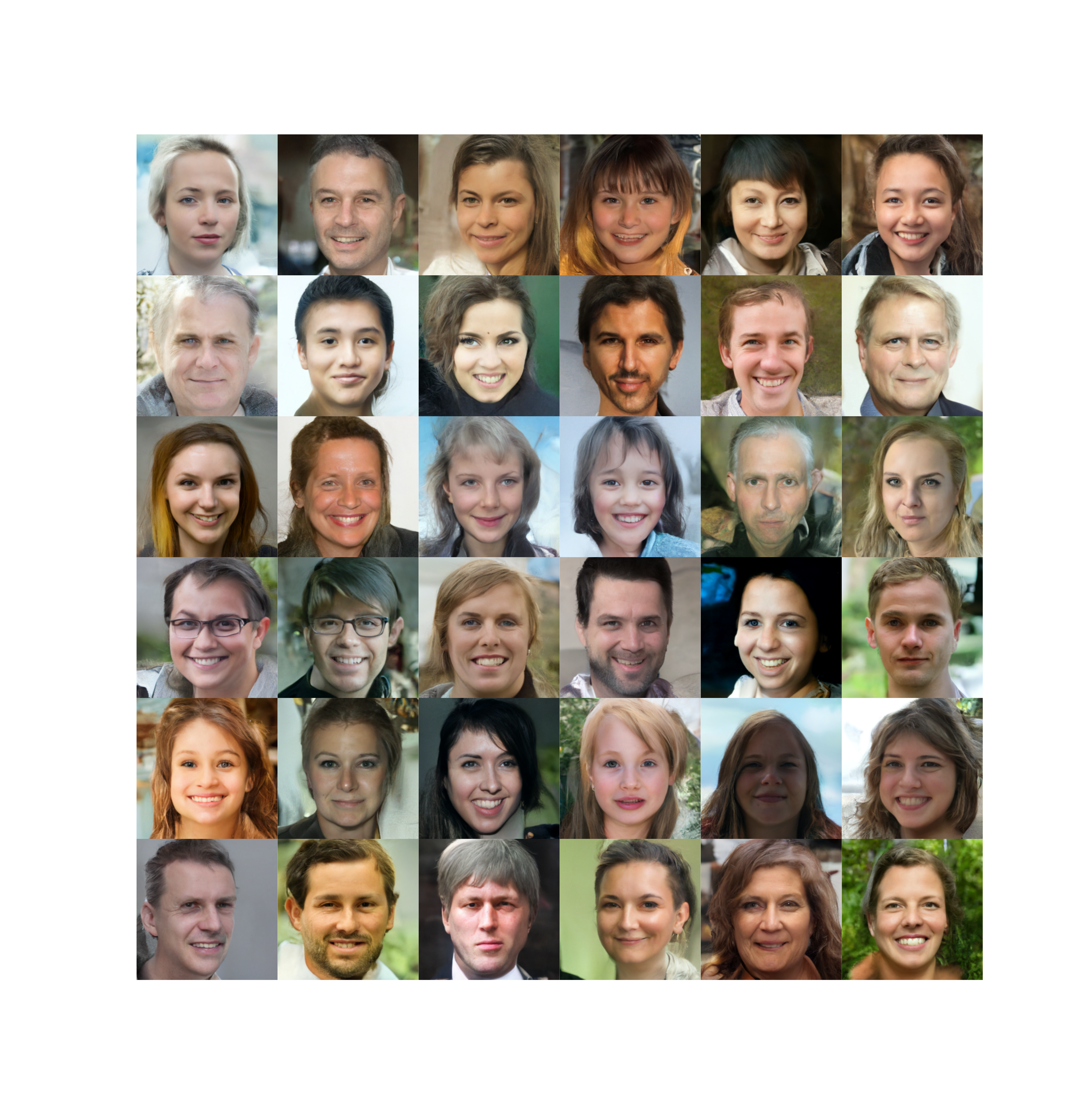}
    \vspace{-15ex}
    \caption{FFHQ ($1024\times1024$)~\cite{karras2019style} samples generated by NCSN++~\cite{song2020score} with TDAS under 100 iterations.}
    \label{fig:demo_sm2}
\end{figure}
\begin{figure}[ht]
    \vspace{-20ex}
    \hspace{-16ex}
    \includegraphics[scale=0.25]{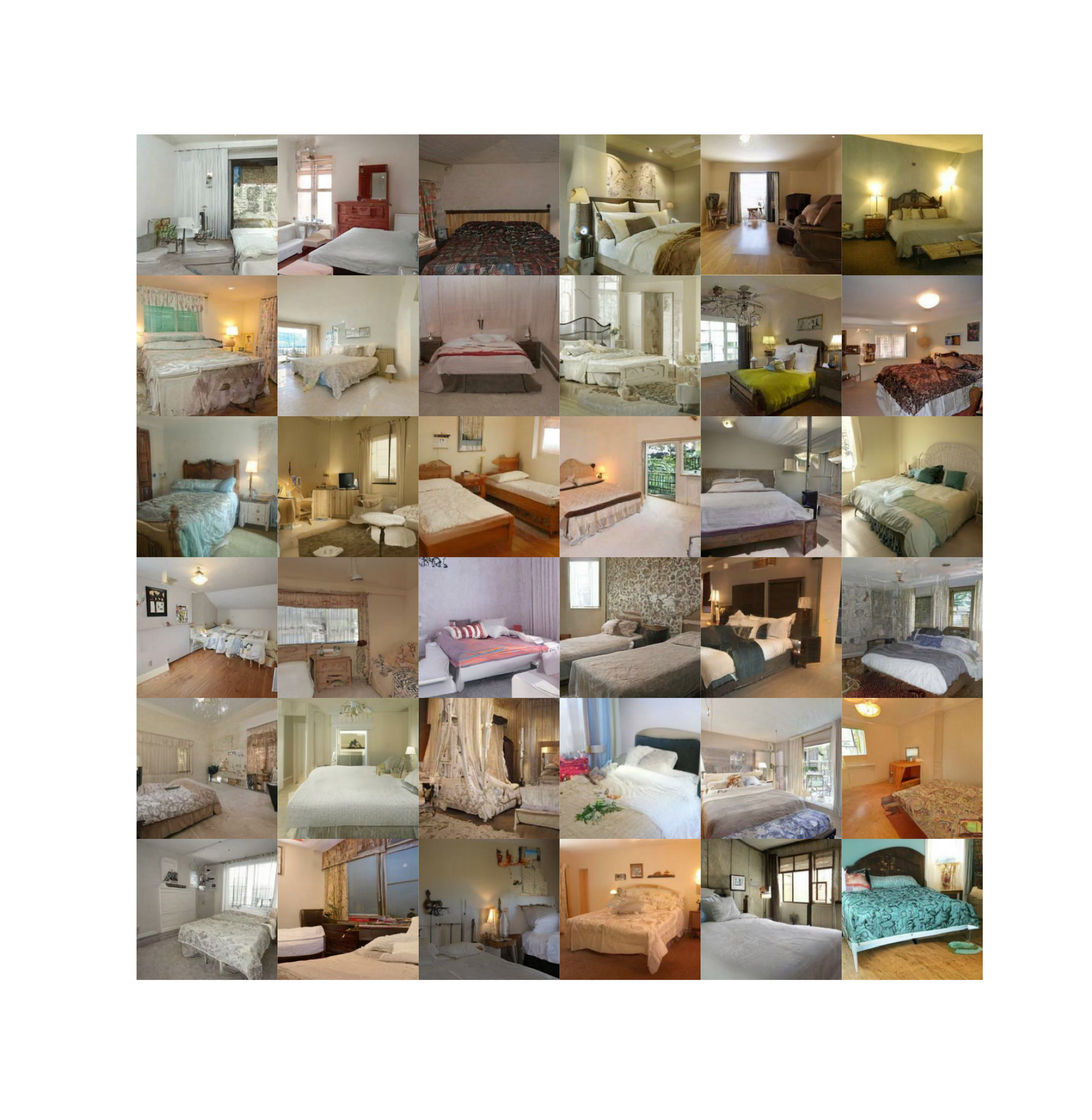}
    \vspace{-15ex}
    \caption{LSUN (bedroom)~\cite{yu2015lsun} samples generated by NCSN++~\cite{song2020score} with TDAS under 400 iterations. Resolution: $256\times256$.}
    \label{fig:bedroom_demo}
\end{figure}
\begin{figure}[ht]
    \vspace{-20ex}
    \hspace{-16ex}
    \includegraphics[scale=0.25]{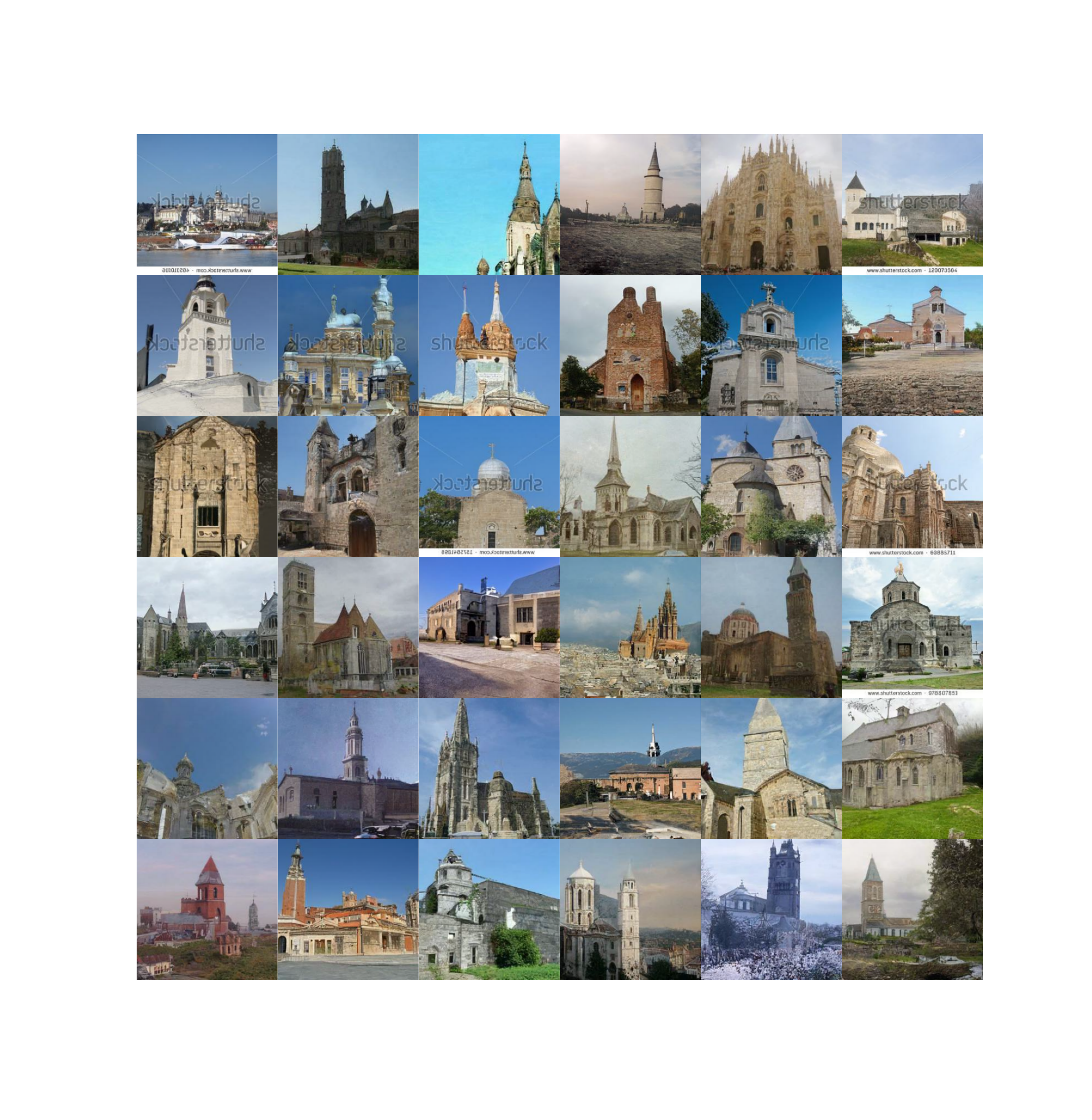}
    \vspace{-15ex}
    \caption{LSUN (church)~\cite{yu2015lsun} samples generated by NCSN++~\cite{song2020score} with TDAS under 400 iterations. Resolution: $256\times256$.}
    \label{fig:church_demo}
\end{figure}

\end{document}